\numberwithin{equation}{section}
\theoremstyle{plain}
\newtheorem{theorem}{Theorem}[section]
\newtheorem*{theorem*}{Theorem}
\newtheorem{lemma}[theorem]{Lemma}
\newtheorem{definition}{Definition}
\newtheorem{proposition}[theorem]{Proposition}
\newtheorem{remark}{Remark}
\newtheorem{appendix_remark}{Remark A.\ignorespaces}
\setlist[enumerate]{noitemsep}
\newcommand*{\eqset}{\coloneqq}
\DeclareMathOperator{\supp}{supp}
\DeclarePairedDelimiter{\norm}{\lVert}{\rVert}
\DeclarePairedDelimiter{\abs}{\lvert}{\rvert}
\newcommand*{\R}{\mathbb{R}}
\let \C \relax
\newcommand*{\C}{\mathbb{C}}
\newcommand*{\N}{\mathbb{N}}
\newcommand*{\Z}{\mathbb{Z}}
\newcommand*{\MM}{\mathcal{M}}
\newcommand*{\PP}{\mathcal{P}}
\newcommand*{\TT}{\mathcal{T}}
\let \SS \relax
\newcommand*{\SS}{\mathcal{S}}
\newcommand*{\RR}{{\mathcal R}}
\DeclareMathOperator{\conv}{conv}
\DeclareMathOperator{\cone}{cone}
\DeclareMathOperator{\Span}{span}
\DeclareMathOperator{\inter}{int}
\DeclareMathOperator{\ind}{\mathbbm{1}}
\newcommand*{\eps}{\varepsilon}
\newcommand*{\lmax}{\lambda_{\max}}
\newcommand*{\lmin}{\lambda_{\min}}
\newcommand*{\one}{\bm{1}}
\let \d \relax
\newcommand*{\d}{\,\mathrm{d}}
\renewcommand*{\bar}[1]{\overline{#1}}
\renewcommand*{\tilde}[1]{\widetilde{#1}}
\let \oldchi \chi
\renewcommand*{\chi}{{\bm \oldchi}}
\let \oldeta \eta
\renewcommand*{\eta}{{\bm \oldeta}}
\let \c \relax
\newcommand*{\c}{\bm{c}}
\let\emptyset\varnothing
\title{On the properties of some low-parameter models for color reproduction in terms of spectrum transformations and coverage of a color triangle}
\author[1]{Alexey Kroshnin}
\author[2, 1, *]{Viacheslav Vasilev}
\author[1, 2]{Egor Ershov}
\author[1]{\\Denis Shepelev}
\author[1]{Dmitry Nikolaev}
\author[3]{Mikhail Tchobanou}
\affil[1]{Institute for Information Transmission Problems, Moscow, Russia}
\affil[2]{Moscow Institute of Physics and Technology, Moscow, Russia}
\affil[3]{Huawei Technologies, Moscow, Russia}
\affil[*]{\href{mailto:vasilev.va@phystech.edu}{vasilev.va@phystech.edu}}
\date{}
\begin{document}
\maketitle

\begin{abstract}
One of the classical approaches to solving color reproduction problems, such as color adaptation or color space transform, is the use of low-parameter spectral models. 
The strength of this approach is the ability to choose a set of properties that the model should have, be it a large coverage area of a color triangle, an accurate description of the addition or multiplication of spectra, knowing only the tristimulus corresponding to them.
The disadvantage is that some of the properties of the mentioned spectral models are confirmed only experimentally.
This work is devoted to the theoretical substantiation of various properties of spectral models.
In particular, we prove that the banded model is the only model that simultaneously possesses the properties of closure under addition and multiplication.
We also show that the Gaussian model is the limiting case of the von Mises model and prove that the set of protomers of the von Mises model unambiguously covers the color triangle in both the case of convex and non-convex spectral locus.
\end{abstract}

\section{Introduction}\label{ref:introduction}

When analyzing images, information about the color of a scene element is usually available only from a three-channel sensor point of view. 
Due to the mismatch between the dimensions of the spectrum space and the sensor space, the measured tristimulus can be generated by different metameric spectra, which complicates the solution of the color reproduction problem \cite{horn1984exact}. 
As a result, lighting compensation is always possible only with some approximation \cite{brill1978device, west1979, qiu2018image}. 
The situation is similar with color space transform \cite{Finlayson1994} and spectral reconstruction \cite{otsu2018}.

To overcome this difficulty, one can introduce restrictions on the set of possible spectra \cite{logvinenko2009object, logvinenko2013metamer}, which allows us to construct correct color transformations.
A particular case of this approach relies on the so-called \textit{spectral models} that define a set of protomeric spectra \cite{Stiles1962, weinberg1976geometry, nikolayev1985model, Griffin2019}.
By the term ``spectral model'' we will mean bijective mapping of the tristimulus space into the space of model parameters. In this case, each spectrum from a certain subspace of all spectra is represented in a unique way using the model parameters \cite{nikolaev2007spectral}.

In the paper \cite{nikolaev2008spectral}, the authors note that in order to achieve high-quality color reproduction, the spectral model must have certain properties that correspond to the linear color formation model \cite{nikolaev2004linear}.
Particularly, it is important that the set of spectral approximations is closed under addition, multiplication by a number and multiplication by itself.
Closure under addition allows us to describe the summation effect of several lighting sources correctly, and closure under multiplication allows us to describe multiple reflections of light from surfaces \cite{nikolaev2006efficiency, gusamutdinova2017verification} and to approximate high-saturation spectra higher accuracy \cite{nikolaev2008spectral}.
Also, an important property is the color set coverage \cite{nikolaev2008spectral, mizokami2012, mirzaei2014object}.
As a rule, it is not always possible to satisfy all the properties simultaneously, so the question arises: ``How many (and which) properties can a single model have?''.

In the paper we discuss \textit{linear models} \cite{brill1978device, Stiles1962, yilmaz1962theory, land1971lightness, nyberg1971II, Sallstrom1973, buchsbaum1980spacial, cohen1982r_matrix, Maloney86constancy, maloney1986evaluation, Marimont1992, Lee1995}. 
They are of interest because, on the one hand, they are computationally efficient, and, on the other hand, they are closed under addition.
However, all of them have a significant drawback~-- they poorly approximate high saturation spectra \cite{maloney1986evaluation, brill1986chromatic} and, as a result, have a low level of coverage of the color triangle (using three primaries, it is impossible to cover strictly convex color triangle) \cite{nikolaev2007spectral}.

It is important to note a special case of linear model~-- the \textit{banded spectral model}, in which the spectra are assumed piecewise constant \cite{Stiles1962, land1971lightness, nyberg1971II, maximov1984transformation}.
Previously, it was hypothesized that this is the only model that simultaneously has the properties of closure under addition and multiplication \cite{nikolaev2007spectral}.
In this paper, we provide a proof of this statement in Section \ref{subsec:zonal}.
In this regard, it makes no sense to look for other models closed under addition and multiplication, but there is still the question of covering a set of colors.

Later on, exponent-based spectral models were proposed, in which the parameters are arguments of an exponential function (in the future we will be interested in a narrower class of them, see Section \ref{sec:spectral_model}).
The use of such models makes it possible to approximate high saturation spectra and increase the coverage area.
First of them was the \textit{Gaussian model} \cite{weinberg1976geometry, nikolayev1985model} and its variants \cite{mizokami2012, Brill2002, MacLeod2003, logvinenko2013object}.
Like all exponential models, the Gaussian model has the property of closure under multiplication.
In addition, its parameters intuitively correspond to the color appearance characteristics \cite{nikolaev2007spectral, nikolaev2008spectral, mizokami2012}.
The peak of the Gaussian roughly corresponds to hue, the standard deviation corresponds to saturation, and the amplitude corresponds to brightness or lightness. 
The Gaussian model can be used to explain the Abney effect, which might be an indication that the human eye code the colors in a similar way \cite{Mizokami2006}.
Experiments \cite{nikolaev2006efficiency, nikolaev2005comparative} has confirmed that the quality of color constancy problem description is significantly higher for the Gaussian models compared to the linear \cite{Lee1995} ones.
The disadvantage of the Gaussian models is that they do not allow achieving uniform coverage of chromaticity values on the color triangle \cite{mizokami2012, mirzaei2014object}.

In an attempt to solve this problem, the authors of \cite{nikolaev2007spectral} has introduced the \textit{von Mises model}.
Authors experimentally proved that this model provides complete coverage for the standard observer color triangle.
At the moment, however, it is not theoretically proven whether the von Mises model allows to completely cover the color triangle of an arbitrary sensor (including ones with non-convex spectral locus).
In the Section~\ref{sec:spectral_model}, the von Mises model is formally defined, in the Section~\ref{subsec:von_Mises} its connection with the Gaussian model is studied, and the following Section~\ref{sec:param} is devoted to the answers to the questions about coverage.
\section{Mathematical model and assumptions}\label{sec:model}

By $\lambda \in \Lambda = (0, + \infty)$, we denote a wavelength of light. 
To describe a spectral power distribution (SPD) of incident radiance on a sensor, we use a finite Borel measure $\mu$ on $\Lambda$: $\mu \in \MM_+(\Lambda)$.
Note that we use general measures, not just absolutely continuous (i.e.\ having density) w.r.t.\ the Lebesgue measure, because $\MM_+(\Lambda)$ contains discrete measures, which allows us to describe a laser, a gas-discharge lamp, etc. 
Moreover, it immediately indicates possible operations on SPDs: one can integrate a (response) function w.r.t.\ an SPD, multiply it by a function (e.g.,\ a reflectance), or sum up SPDs, but cannot multiply them.
These properties are sufficient for linear model description \cite{nikolaev2004linear}.
This also highlights the different nature of illuminants, represented by SPDs, and spectral reflectances.
We discuss the structure and properties of various SPDs families in more detail in Sections~\ref{sec:spectral_model} and \ref{sec:param}.

Now let us describe a model of color perception we use.
Following \cite{weinberg1976geometry}, we characterize a sensor (of a camera or an observer) by its \emph{response function} $\chi \colon \Lambda \to \R_+^d$.
Then, a \emph{color} $\c$ observed by the sensor under an incident light with the SPD $\mu \in \MM_+(\Lambda)$ is given by
\begin{equation}\label{eq:color_map}
    \c(\mu) \eqset \int_\Lambda \chi(\lambda) \d \mu(\lambda) \in \R_+^d.
\end{equation}
Note that one can consider this as an equivalence class of indistinguishable SPDs for a given sensor.
Any SPD corresponding to some color $\c$ is called a \emph{metamer} of $\c$ \cite{weinberg1976geometry}.
We assume $\chi \in C_b(\Lambda; \R_+^d)$, i.e.\ it is a continuous bounded vector function.
Also, suppose $\supp \chi = [\lmin, \lmax]$ with $0 \le \lmin < \lmax < \infty$ depending on $\chi$ and $\chi(\lambda) \neq \bm{0}$ on $(\lmin, \lmax)$.

\begin{figure}
    \center{\includegraphics[scale=0.4]{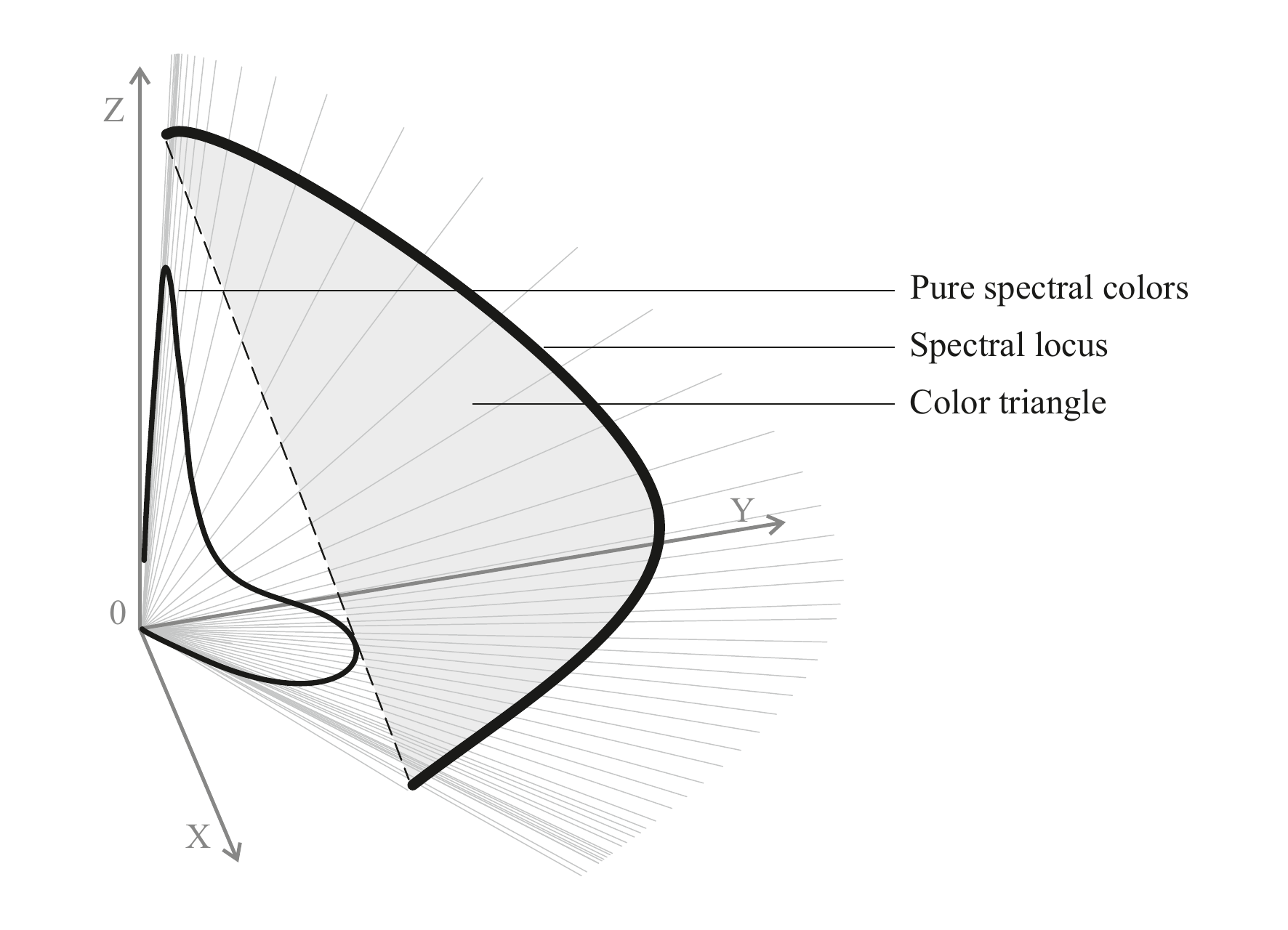} \includegraphics[scale=0.4]{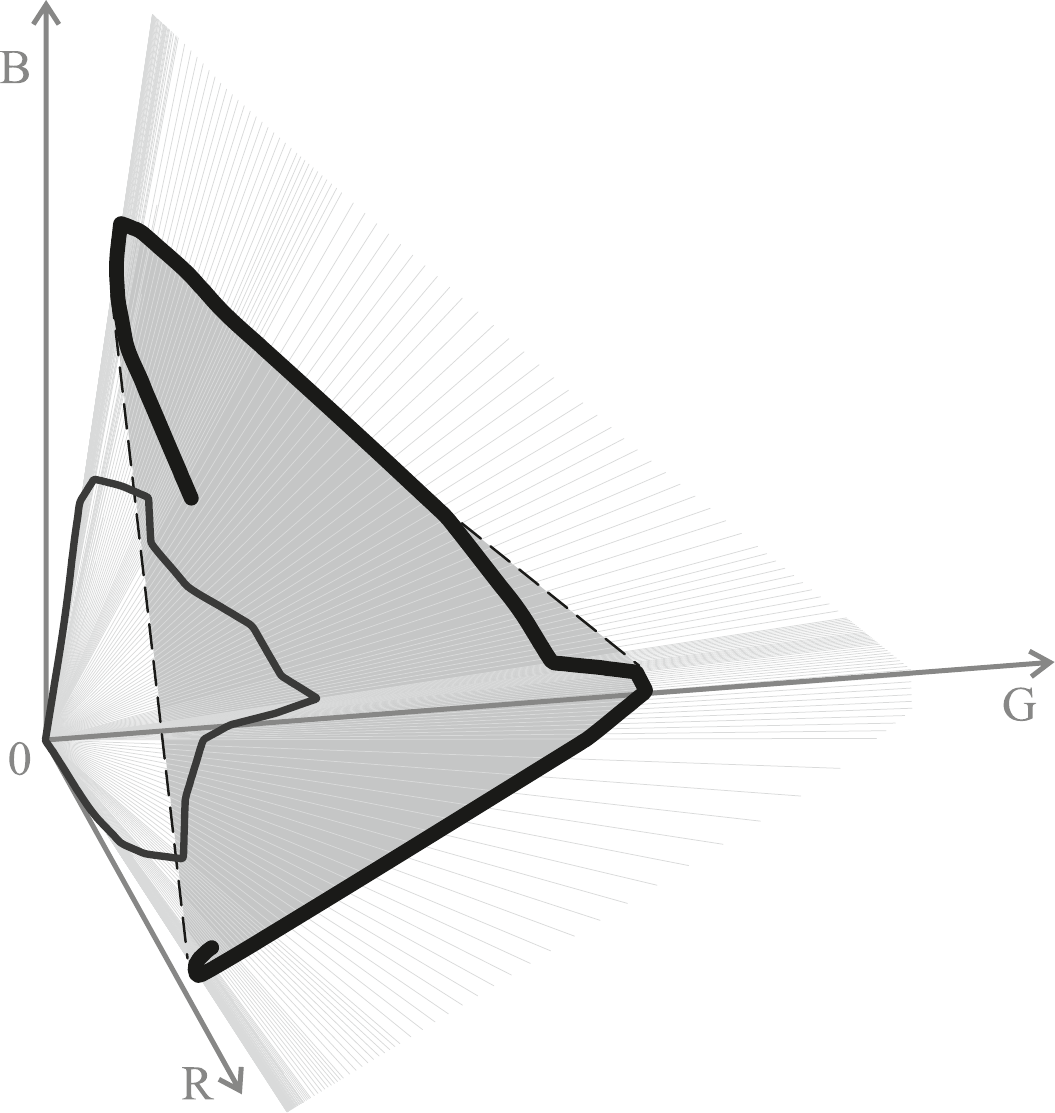}}
    \caption{\centering Color coordinates, spectral locus, and color triangle for the standard  observer CIE 1931 and Nikon D90 (provided by \cite{Jiang2013}). In the case of a camera, a non-convexity of the locus can be observed.} \label{fig:sensor_color_space}
\end{figure}

Now we define the normalized response function 
\begin{equation}\label{eq:eta}
    \eta(\lambda) \eqset \frac{\chi(\lambda)}{\langle \one, \chi(\lambda) \rangle} \in \Delta^{d - 1},
\end{equation}
where $\one$ is the vector of ones, $\Delta^{d - 1} \subset \R^d$ is the standard \mbox{$(d-1)$-dimensional} simplex, and $\langle \cdot, \cdot \rangle$ stands for the dot product.
We assume that $\eta$ can be continuously extended to $[\lmin, \lmax]$, i.e.\ there exist
\[
\eta(\lmin) = \lim_{\lambda \to \lmin + 0} \eta(\lambda), \; \eta(\lmax) = \lim_{\lambda \to \lmax - 0} \eta(\lambda).
\]
The curve $\eta([\lmin, \lmax])$ is called the \emph{spectral locus}, see Fig.~\ref{fig:sensor_color_space}.
Then we define a reweighted SPD as
\begin{equation}\label{eq:mu_tilde}
    \tilde\mu \eqset \langle \chi(\cdot), \one \rangle \mu \in \MM_+(\Lambda).
\end{equation}
Thus,
\[
\c(\mu) = \int_{[\lmin, \lmax]} \eta \d \tilde\mu.
\]
However, we would like to point out that $\tilde\mu$ is concentrated on $(\lmin, \lmax)$, and not all measures from $\MM_+([\lmin, \lmax])$ can be represented in this way (e.g., the ones having an atom at $\lmin$ or $\lmax$).

It is easy to see that the set of all colors for a given sensor is a pointed convex cone $\PP \subset \R_+^d$ called the \emph{color cone}. 
Take an affine subspace $A = \left\{\bm{x} \in \R^d : \langle \bm{x}, \one \rangle = 1\right\}$; then the \emph{color triangle} $\TT \eqset \PP \cap A$ is a base of $\PP$, i.e. 
\[
\forall \c \in \PP\; \exists a \ge 0, \bm{u} \in \TT \text{ such that } \c = a \bm{u}. 
\]
We will usually consider $\TT$ as a subset of the hyperplane $A$ endowed with the corresponding topology.
We always suppose $\PP$ is non-degenerate, i.e.\ $\inter \PP \neq \emptyset$ (hence $\inter \TT \neq \emptyset$).

Finally, let us briefly discuss a relation between the spectral locus and the color triangle.

\begin{lemma}\label{lem:convex_hull}
    The following relations hold between the color triangle and the spectral locus:
    \[
    \inter \conv\bigl(\eta([\lmin, \lmax])\bigr)
    \subset \TT \subset \conv\bigl(\eta([\lmin, \lmax])\bigr).
    \]
\end{lemma}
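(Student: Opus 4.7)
Let me set $K \eqdef \conv\bigl(\eta([\lmin,\lmax])\bigr)$ and $K_0 \eqdef \conv\bigl(\eta((\lmin,\lmax))\bigr)$. The two inclusions are of rather different character: the outer one is a general fact about integrals against probability measures, while the inner one requires a careful analysis of what happens at the endpoints of the spectral locus, which the measure $\tilde\mu$ is not allowed to charge.

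For the outer inclusion $\TT\subset K$: any $\c\in\TT$ satisfies $\c = \int_{(\lmin,\lmax)} \eta \d\tilde\mu$, and because $\langle\c,\one\rangle = 1$ together with $\langle\eta,\one\rangle\equiv 1$ forces $\tilde\mu(\Lambda)=1$, the measure $\tilde\mu$ is a probability measure. A standard separation argument (if $\c$ were outside $\overline{\conv}(\eta([\lmin,\lmax]))$, a separating linear functional would contradict $\int L\circ\eta \d\tilde\mu \ge \inf L\circ\eta$) shows $\c$ lies in the closed convex hull of the compact set $\eta([\lmin,\lmax])$, which in finite dimensions coincides with $K$ itself.

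For the inner inclusion, the key auxiliary step is $K_0\subset\TT$. By Carathéodory, any $p\in K_0$ can be written as $p=\sum_{i=1}^{d+1}\alpha_i\eta(\lambda_i)$ with $\alpha_i\ge0$, $\sum\alpha_i = 1$ and $\lambda_i\in(\lmin,\lmax)$. On the open interval the hypothesis $\chi(\lambda_i)\ne\bm{0}$ with $\chi\in\R_+^d$ guarantees $\langle\chi(\lambda_i),\one\rangle>0$, so the atomic measure $\mu \eqdef \sum_i \frac{\alpha_i}{\langle\chi(\lambda_i),\one\rangle}\delta_{\lambda_i}$ is a bona fide element of $\MM_+(\Lambda)$ realizing $\c(\mu)=p$ with $\langle p,\one\rangle=1$; hence $p\in\TT$.

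It remains to deduce $\inter K\subset K_0$. Continuity of $\eta$ at $\lmin$ and $\lmax$ places $\eta(\lmin),\eta(\lmax)$ in $\overline{\eta((\lmin,\lmax))}\subset\overline{K_0}$, so $\eta([\lmin,\lmax])\subset\overline{K_0}$ and thus $K\subset\overline{K_0}$. Since $K$ is the convex hull of a compact subset of a finite-dimensional space, it is itself compact, giving $\overline{K_0}=K$. The standing assumption $\inter\TT\ne\emptyset$ forces $\inter K\ne\emptyset$ via the outer inclusion, so $K_0$ is a convex set of full dimension in the hyperplane $A$, and the classical identity $\inter C = \inter\overline{C}$ for full-dimensional convex sets yields $\inter K = \inter\overline{K_0}=\inter K_0\subset K_0\subset\TT$. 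The only genuine subtlety is the endpoint issue forcing the statement to be restricted to the interior of $K$; once one separates the roles of $K$ and $K_0$ it is handled cleanly by the convex-analytic identity above.
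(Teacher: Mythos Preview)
Your proof is correct and follows the same skeleton as the paper's: both establish the outer inclusion first, then the auxiliary fact $K_0\subset\TT$ via atomic measures, and finally $\inter K\subset K_0$. The technical execution differs in two places. For the outer inclusion the paper approximates an arbitrary $\tilde\mu$ by discrete probability measures $\tilde\mu_n$, so that $\c(\mu_n)\in K_0$, and passes to the limit using that $K$ is closed; you instead invoke a separation argument directly. Both are standard ways of seeing that the barycenter of a probability measure lies in the closed convex hull of the range. For the step $\inter K\subset K_0$, the paper simply writes the inclusion without justification, while you supply the clean argument via $\overline{K_0}=K$ together with the identity $\inter C=\inter\overline{C}$ for convex sets with nonempty interior; your version is more careful here and makes explicit why the endpoint issue forces the restriction to $\inter K$.
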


\begin{proof}
    Since $\eta$ is continuous, for any compact set $K \subset [\lmin, \lmax]$ it holds that $\eta(K) \eqset \left\{\eta(\lambda) : \lambda \in K\right\}$ is compact, thus its convex hull $\conv\bigl(\eta(K)\bigr)$ is compact as well due to Carath{\'e}odory's theorem. 
    By the definition of the color cone and the color triangle we have
    \begin{align*}
        \TT &= \left\{\c(\mu) : \mu \in \MM_+(\Lambda),\; \int_\Lambda \langle \one, \chi(\lambda) \rangle \d \mu(\lambda) = 1\right\} \\
        &= \left\{\int_{[\lmin, \lmax]} \eta \d \tilde\mu : \mu \in \MM_+(\Lambda),\; \tilde\mu([\lmin, \lmax]) = 1\right\} .
    \end{align*}
    For any $\mu \in \MM_+(\Lambda)$ with $\tilde\mu([\lmin, \lmax]) = 1$ one can find a sequence of discrete measures $\mu_n \in \MM_+(\Lambda)$ such that $\mu_n$ weakly converge to $\mu$ and $\tilde\mu_n([\lmin, \lmax]) = 1$. 
    
    Since
    \begin{align*}
        \c(\mu_n) &\in \conv\bigl(\eta((\lmin, \lmax))\bigr) \\
        &= \left\{\sum_{i = 1}^n w_i \eta(\lambda_i) : \sum_{i = 1}^n w_i = 1,\, w_i \ge 0,\, \lambda_i \in (\lmin, \lmax),\, n \in \N\right\},
    \end{align*}
    $\c(\mu_n) \to \c(\mu) \in \TT$, and $\conv\bigl(\eta([\lmin, \lmax])\bigr)$ is closed, we conclude that 
    \[
    \TT \subset \conv\bigl(\eta([\lmin, \lmax])\bigr).
    \]
    On the other hand, taking again discrete measures $\mu$ one can obtain any color from $\conv\bigl(\eta((\lmin, \lmax))\bigr)$.
    Hence 
    \[
    \inter \conv\bigl(\eta([\lmin, \lmax])\bigr) 
    \subset \conv\bigl(\eta((\lmin, \lmax))\bigr) 
    \subset \TT.
    \]
\end{proof}

\begin{remark}
    If $\eta(\lmin)$ or $\eta(\lmax)$ are not physically reachable, then ``purple'' colors can be outside the color triangle, nevertheless, the color triangle coincides with the spectral locus convex hull up to the boundary.
\end{remark}

\section{Spectral models: properties and uniqueness}\label{sec:spectral_model}

We define a \emph{spectral model} as a parametric family of \emph{illuminants} $\SS \subset \MM(\Lambda)$ and spectral \emph{reflectances} $\RR$ that are Borel functions from $\Lambda$ to $\R$ \cite{nikolaev2007spectral, nikolaev2008spectral}. 
Note that from a physical point of view, we should consider only nonnegative measures $\SS \subset \MM_+(\Lambda)$ and reflectances making values only from $[0, 1]$; however, sometimes a general setting is considered, e.g.\ in linear models described below.
In the simplest case, a radiance on the sensor is given by multiplication of an illuminant $S \in \SS$ by a reflectance $R \in \RR$ and some positive constant $c > 0$ depending on the geometry of the scene and the viewing conditions: $\mu = c R S$.
The above model is by definition \emph{hybrid} in terms of \cite{nikolaev2008spectral}, i.e.\ the set of illuminants differs from the set of reflectances. However, it is possible to consider a non-hybrid model as well: fix a \emph{reference} SPD $\mu_0$ and consider illuminants of the form
\[
\SS = \left\{c R \mu_0 : R \in \RR,\; c \ge 0\right\}.
\]
Probably the most popular type of a (hybrid) spectral model is the linear one, where $\SS$ and $\RR$ are finite-dimensional linear spaces of signed measures and functions, respectively \cite{brill1978device, yilmaz1962theory, maloney1986evaluation}.
Another model that we are interested in is the \emph{von Mises} one (called the Besselian model in \cite{nikolaev2007spectral}) with width $\Delta \lambda > 0$: it is a non-hybrid model with spectral densities and reflectances of the form
\begin{equation}\label{eq:von_mises}
    f_{a,b,s}(\lambda) \eqset \exp\left(b + a \cos \frac{2 \pi (\lambda - s)}{\Delta \lambda}\right).
\end{equation}
Obviously, if $\Delta \lambda = \lmax - \lmin$, these functions are periodic on $[\lmin, \lmax]$.
We also define the \emph{generalized von Mises} family generated by a function $h$ as follows:
\begin{equation}\label{eq:gen_von_mises}
    f_{a,b,s}(\lambda) \eqset \exp\left(b + a h(\lambda - s)\right).
\end{equation}
In particular, taking $h(x) = -x^2$, we obtain the well-known Gaussian model (if we allow negative $a$, then it also includes reciprocal Gaussians) \cite{weinberg1976geometry}.

Now, let us list some important properties that a spectral model may satisfy (cf.\ \cite[Section 3]{nikolaev2008spectral}).
\begin{enumerate}
    \item\label{item:scalable_illum} $\SS$ is closed under multiplication by a positive constant.
    
    \item\label{item:scalable_refl} $\RR$ is closed under multiplication by a constant from $[0, 1]$ or $\R_+$.

    \item\label{item:mult} $\RR$ is closed under pointwise multiplication.
    
    \item\label{item:mult_prior} $\SS$ is closed under multiplication by reflectances.
    
    \item\label{item:additivity} Additivity: $\SS$ or $\RR$ is closed under addition.
    
    \item\label{item:complete} Completeness: map from $\SS$ to colors is surjective.
    
    \item\label{item:injective} Injectivity: map from $\SS$ to colors is one-to-one.
    
    \item\label{item:periodic} Periodicity: $\SS$ and $\RR$ are closed under cyclical shifts on some interval $[\lmin, \lmax]$.
\end{enumerate}

It is quite natural to always assume properties~\ref{item:scalable_illum} and~\ref{item:scalable_refl}, since a source SPD and a reflectance define an incident radiance only up to a multiplicative constant depending on the geometry of the scene and the viewing conditions.
Property~\ref{item:mult} allows incorporating multiple reflections. 
Property~\ref{item:mult_prior} is related to conjugate priors in Bayesian statistics and can be important to non-hybrid models.
The role of property~\ref{item:periodic} will be explained later in Section~\ref{sec:param}.

Finally, properties~\ref{item:complete} and~\ref{item:injective} allow us to parametrize colors via a spectral model. 
This, in turn, is essential for the solution of a color constancy or an illumination discounting problems in the following way (an \emph{inversion model} in terms of \cite{brill1986chromatic}): assume we simultaneously obtain colors corresponding to a source $\c(S)$ and a reflected light $\c(R S)$; then we can estimate an illuminant $S$ from the first color and, based on this estimate, a reflectance $R$ from the second one. Given the reflectance estimate, one can compute a corresponding color under a reference illumination, e.g.\ an equi-energy spectrum or daylight (D50, D65) spectra.
In particular, for a linear model, it boils down to solving two systems of linear equations: a system for $S$ is fixed, and the other one for $R$ depends on its solution.
Another possible way is to use a group of \emph{protomers} \cite{weinberg1976geometry}: consider a non-hybrid model where $\SS = \left\{R \mu_0 : R \in \RR\right\}$, and $\RR$ is a Lie group w.r.t.\ pointwise multiplication; we call them protomers if the map $\c \colon \SS \to \PP$ is injective, i.e.\ for any color $\c$ there exists at most one metamer from this family. As shown in \cite{weinberg1976geometry}, $\RR$ has a form
\begin{equation}\label{eq:loglinear}
    \RR = \left\{R(\lambda) = \exp\left(\sum_{k = 1}^3 p_k P_k(\lambda)\right) : p_k \in \R\right\}.
\end{equation}
Obviously, one can choose any basis $\{P_1, P_2, P_3\}$ that spans the same linear space.
By definition, this model satisfies properties \ref{item:mult}, \ref{item:mult_prior}, and \ref{item:injective}. 
Moreover, if $\Span\{P_1, P_2, P_3\}$ contains a constant function, it also satisfies properties \ref{item:scalable_illum} and \ref{item:scalable_refl}. 
An important example of protomers is given by \cite{weinberg1976geometry} is the Gaussian family. 
Section~\ref{sec:param} shows that under additional assumptions on a response function (satisfied by the standard observer's one), the von Mises model with $\Delta \lambda = \lmax - \lmin$ is protomeric and complete.
Note that the Gaussian family is not complete in the case of the standard observer \cite{nikolaev2007spectral, mizokami2012, mirzaei2014object}.

Further, in particular, it will be shown that, from the point of view of certain properties, the banded spectral model (see Section \ref{ref:introduction}) and the von Mises model are unique.

\subsection{Banded spectral model}\label{subsec:zonal}

The following proposition shows that the banded model is the only model closed under both addition and multiplication.

\begin{proposition}
    Let $K \subset \R^\Lambda$ be a $k$-dimensional convex cone of functions from $\Lambda$ to $\R$, closed under pointwise multiplication, i.e.\ for any $f, g \in K$ it holds that $f g \in K$.
    Then there exist unique nonempty disjoint sets $A_1, \dots, A_k \subset \Lambda$ such that any function $f \in K$ can be represented as
    \[
    f(\lambda) = \sum_{i = 1}^k v_i \ind_{A_i}(\lambda), \; v_i \in \R.
    \]
    \label{prop:banded_unique}
\end{proposition}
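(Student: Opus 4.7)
The plan is to pass to the linear span $V := K - K = \mathrm{span}(K)$, a $k$-dimensional subspace of $\R^\Lambda$. Expanding $(f-g)(h-j) = (fh + gj) - (fj + gh)$ shows that $V$ itself is closed under pointwise multiplication, since each of $fh, gj, fj, gh$ lies in $K$. Thus $V$ is a commutative $\R$-algebra of real-valued functions of dimension $k$, and the goal reduces to proving $V = \mathrm{span}(\ind_{A_1}, \ldots, \ind_{A_k})$ for pairwise disjoint nonempty $A_i \subset \Lambda$.

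First I would observe that every $f \in V$ takes only finitely many values: the $k+1$ powers $f, f^2, \ldots, f^{k+1}$ lie in the $k$-dimensional space $V$, hence are linearly dependent, giving a nonzero polynomial $P$ with vanishing constant term such that $P(f) \equiv 0$; so $f(\lambda)$ lies in the finite root set of $P$. Using this, pick $f^\ast \in V$ attaining $N := \max_{f \in V} |\mathrm{range}(f)|$ and let $C_1, \ldots, C_N$ be its level sets with values $w_1, \ldots, w_N$. A purely linear argument then shows every $g \in V$ must be constant on each $C_i$: otherwise, for generic $\alpha \in \R$ the combination $f^\ast + \alpha g$ would take strictly more than $N$ distinct values (at least two on the offending $C_i$, with no collisions across the remaining $C_j$ for generic $\alpha$), contradicting maximality of $N$. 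Hence $V \subseteq \mathrm{span}(\ind_{C_1}, \ldots, \ind_{C_N})$, giving $k \le N$.

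The reverse inclusion is where multiplicative closure does the real work. The Lagrange polynomials $L_i(x) = \prod_{j \neq i} (x - w_j)/(w_i - w_j)$ satisfy $L_i(f^\ast) = \ind_{C_i}$. If all $w_i \neq 0$, the identity $\ind_{C_i} = w_i^{-1}\, f^\ast \, L_i(f^\ast)$ realizes each indicator as a polynomial in $f^\ast$ with zero constant term, hence an element of $V$; dimension counting then forces $k = N$ and $V = \mathrm{span}(\ind_{C_1}, \ldots, \ind_{C_N})$. If instead $w_N = 0$, the factor $x = (x - w_N)$ already appears in each $L_i$ with $i < N$, so $L_i(f^\ast) = \ind_{C_i} \in V$ for $i \le N-1$; dimension counting then yields either $k = N$ (so $V$ equals the full $N$-dimensional span and $\ind_{C_N}$ is included as well) or $k = N - 1$ (so $V$ is spanned exactly by the $N - 1$ indicators already produced). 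In every case, $V$ is spanned by $k$ indicators of disjoint nonempty sets, establishing existence.

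For uniqueness, the sets $A_1, \ldots, A_k$ coincide with the equivalence classes of $\lambda \sim \mu \iff (\forall f \in V)\, f(\lambda) = f(\mu)$, restricted to the support $\{\lambda : \exists f \in V,\, f(\lambda) \neq 0\}$, which is an intrinsic description independent of any basis. The main obstacle I expect is the degenerate case $w_N = 0$ with $1 \notin V$: here the Lagrange trick fails to produce $\ind_{C_N}$, and one must argue via the dimension count that this indicator is genuinely absent from $V$, so that the representation correctly uses only $k = N - 1$ sets and simply omits the \emph{zero stratum} $C_N$.
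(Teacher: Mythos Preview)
Your proof is correct and follows the same overall strategy as the paper: pass to the linear span $V = K - K$, use linear dependence of $f, f^2, \ldots, f^{k+1}$ to bound the range of each $f$, take a function $f^\ast$ of maximal range, show every $g \in V$ is constant on its level sets via a generic linear combination, and finish by a dimension count. The differences are in the execution. The paper phrases the range bound as a Vandermonde determinant and runs the dimension step as a bare contradiction (if the maximal number of nonzero values were $m < k$, then $\dim V \le m$); you instead use Lagrange interpolation to exhibit the indicators $\ind_{C_i}$ explicitly as polynomials in $f^\ast$ with zero constant term, which is more constructive but forces your case split on whether $0$ is a value of $f^\ast$. You also make explicit the fact that $V$ is multiplicatively closed (which the paper uses tacitly when writing $f^j \in L_K$), and your uniqueness argument via the equivalence relation $\lambda \sim \mu \iff f(\lambda) = f(\mu)$ for all $f \in V$ is cleaner than the paper's ad hoc one. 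Both routes are equally valid; yours is slightly more algebraic in flavor.
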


\begin{proof}
    For the $k$-dimensional convex cone $K$, its linear span is
    \[
    L_K \eqset \Span(K) = K - K \eqset \left\{f - g : f, g \in K\right\},
    \]
    and $\dim(L_K) = k$. Thus, any function from $L_K$ can be represented as
    \[
    f(\lambda) = \sum_{i = 1}^k v_i b_i(\lambda), \; v_i \in \R,
    \]
    and all the basis functions $\{b_1(\lambda), \dots, b_k(\lambda)\}$ belong to the cone $K$. 
    
    \paragraph{Step 1.} We show that an arbitrary function from $L_K$ can take no more than $k$ non-zero values. 
    Assume the opposite, namely, that there is a function $f \in L_K$ taking $k + 1$ a non-zero value.
    Consider $\lambda_1, \dots, \lambda_{k+1}$ from $\Lambda$, such that $f(\lambda_1) = a_1, \dots, f(\lambda_{k+1}) = a_{k+1}$ with different values $a_1, \dots, a_{k+1} \in \R \setminus \{0\}$. 
    Consider the determinant of the matrix of powers:
    \[
    \begin{vmatrix}
        a_1 & a_1^2 & \cdots & a_1^{k+1}\\
        a_2 & a_2^2 & \cdots & a_2^{k+1}\\
        \vdots & \vdots & \ddots & \vdots\\
        a_{k+1} & a_{k+1}^2 & \cdots & a_{k+1}^{k+1}\\
    \end{vmatrix} = \prod^{k+1}_{i=1}a_i\prod^{k+1}_{i>j}(a_i-a_j).
    \]
    The column $j$ of the matrix of this determinant consists of the values of the function $f^j \in L_K$ at the points $\lambda_1, \dots, \lambda_{k+1}$.
    Since the values $a_1, a_2, \dots, a_{k+1}$ are distinct, the determinant turns out to be non zero.
    However, the dimension of the space is $k$, which means that the rank of the matrix does not exceed $k$. 
    Thus, the determinant must be zero, and we get a contradiction with the fact that functions can take more than $k$ values.
    
    \paragraph{Step 2.} Now we show that there is a function from $L_K$ that takes exactly $k$ non-zero values. Suppose the opposite, let there be a function $f$ that takes $m$ non-zero values $a_1, \dots, a_m$, where $m < k$, and there are no function from $L_K$ taking more values. It follows that there exist non-empty disjoint sets $A_1, \dots A_m \subset \Lambda$, such that $A_i = f^{-1}(a_i)$, $i \in 1, \dots, m$.
    
    Take a function $g \in L_K$. Obviously, any their linear combination $\alpha f + \beta g \in L_K$ also takes no more than $m$ non-zero values, and hence on any set $A_i$ these functions take constant values. This means that any function from $L_K$ can be represented as a linear combination of only $m$ basis functions of the form $b_i = \ind_{A_i}$, which contradicts the fact that the dimension of the space is $k$.
    
    Therefore, any function from $L_K$ and, consequently, from the cone $K$ is a simple function:
    \[
    f(\lambda) = \sum_{i = 1}^k v_i \ind_{A_i}(\lambda),
    \]
    where $v_i \in \R$, and sets $A_1, \dots A_k \subset \Lambda$ are such that $A_i \cap A_j = \emptyset \; \forall i \neq j$.
    
    \paragraph{Step 3.}
    Now we show uniqueness of the sets $A_1, \dots, A_k$. Suppose that there is another similar family of non-empty disjoint sets $B_1, \dots, B_k$. Let a function $f \in L_K$ take $k$ non-zero different values. If the sets $\{A_i\}_{i=1}^k$ and $\{B_j\}_{j=1}^k$ do not match, there exist a set $A_l$, $\lambda' \in B_p, \; p\in\{1,\dots,k\}$, and $ \lambda'' \in B_q, \; q \in \{1,\dots,k\} \setminus \{p\}$, such that $\lambda', \lambda'' \in A_l$.
    Then $f(\lambda') = f(\lambda'')$, which means that on $B_p$ and on $B_q$ the function takes the same values, which contradicts the fact that the function takes $k$ values.
\end{proof}

According to \cite{Shepelev2020}, for any banded model there exists a linear mapping from the model parameters space to the color space, represented as a matrix with columns from the color cone.
The converse is also true, i.e. that any such matrix approximately corresponds to some banded model.

\begin{proposition}
    Let $\tilde{\mu}$ be atomless and $\supp \tilde{\mu} = [\lmin, \lmax]$,  and $\bar{\TT}$ denotes the closure of $\TT$. Then for any matrix $P \in \R^{d \times n}$ with columns $\c_1, \dots, \c_n \in \bar{\TT}$ and $\eps > 0$ there exist disjoint closed sets $A_1, \dots, A_n \subset [\lmin, \lmax]$ such that
    \[
    \norm*{\c_i - \frac{1}{\tilde\mu(A_i)} \int_{A_i} \eta \d \tilde{\mu}} \le \eps, \quad 1 \le i \le n,
    \]
    i.e.\ $P$ is almost a matrix of transition from parameters of a banded model to the color space.
    \label{prop:banded_matrix}
\end{proposition}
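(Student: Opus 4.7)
My plan reduces the statement to Carathéodory's theorem combined with a mass-allocation argument that exploits the atomless, full-support structure of $\tilde\mu$. The idea is to express each target color $\c_i$ as an exact convex combination of values of $\eta$ at finitely many wavelengths, and then to realize that combination as a $\tilde\mu$-weighted average of $\eta$ over a disjoint union of small closed ``buckets'' carved out near those wavelengths.

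First I would invoke Lemma~\ref{lem:convex_hull}: since $\eta([\lmin, \lmax])$ is compact, its convex hull is closed and contains $\bar\TT$. Carathéodory's theorem then produces, for each $i$, wavelengths $\lambda_{i,1}, \dots, \lambda_{i,d} \in [\lmin, \lmax]$ and weights $w_{i,j} \ge 0$ with $\sum_{j} w_{i,j} = 1$ satisfying $\c_i = \sum_{j=1}^{d} w_{i,j}\, \eta(\lambda_{i,j})$ exactly (no approximation at this stage). Listing the distinct points arising as some $\lambda_{i,j}$ as $\mu_1, \dots, \mu_m$, uniform continuity of $\eta$ on $[\lmin, \lmax]$ lets me choose pairwise disjoint closed intervals $J_k \ni \mu_k$ so small that $\|\eta(\lambda) - \eta(\mu_k)\| \le \eps$ for every $\lambda \in J_k$; since $\supp \tilde\mu = [\lmin, \lmax]$, each $J_k$ has strictly positive $\tilde\mu$-mass.

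The substantive step is mass allocation inside each $J_k$. I fix a common target mass $M > 0$ small enough that $M \sum_{(i,j):\, \lambda_{i,j} = \mu_k} w_{i,j} \le \tilde\mu(J_k)$ for every $k$. Atomlessness makes the cumulative distribution of $\tilde\mu$ restricted to $J_k$ continuous, so I can partition $J_k$ into pairwise disjoint closed subintervals $A_{i,j}$ of prescribed masses $\tilde\mu(A_{i,j}) = w_{i,j} M$, indexed over the pairs $(i,j)$ whose representative wavelength is $\mu_k$ (discarding indices with $w_{i,j} = 0$). Setting $A_i \eqset \bigcup_{j} A_{i,j}$ yields disjoint closed sets of common mass $M$, and the required bound is then an immediate triangle inequality,
\[
\left\| \c_i - \frac{1}{\tilde\mu(A_i)} \int_{A_i} \eta \, \d \tilde\mu \right\|
= \left\| \sum_{j} w_{i,j}\Bigl[ \eta(\lambda_{i,j}) - \tfrac{1}{\tilde\mu(A_{i,j})} \int_{A_{i,j}} \eta \, \d \tilde\mu \Bigr] \right\|
\le \sum_{j} w_{i,j} \cdot \eps = \eps.
\]

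The real obstacle here is bookkeeping rather than analysis: several distinct columns $\c_i$ may share a representative wavelength $\mu_k$, so a single neighborhood $J_k$ must feed several different $A_i$'s at once. This is precisely where the atomless hypothesis does the work, since it lets me split $J_k$ into arbitrarily many closed pieces of prescribed positive masses as long as the total budget fits inside $\tilde\mu(J_k)$, which is arranged by shrinking the common target mass $M$. All remaining steps (Carathéodory, uniform continuity, the triangle inequality) are standard.
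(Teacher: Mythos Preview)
Your argument is correct and takes a genuinely different route from the paper's proof. The paper first replaces each $\c_i$ by a nearby interior point $\c'_i \in \inter \TT$, represents $\c'_i$ as $\int f_i \eta \d\tilde\mu$ for some nonnegative density $f_i$ (invoking material from Section~\ref{sec:param}), then cuts $[\lmin,\lmax]$ into $nN$ equal-length segments, assigns the segments with index $\equiv i \pmod n$ to color $i$, carves a subsegment $J_{i,k}$ of the right mass from each, and controls the error via a piecewise constant approximation $\eta_N \to \eta$ as $N\to\infty$. Your Carath\'eodory-based construction is more direct and more elementary: it avoids the forward reference to Section~\ref{sec:param}, needs no limiting argument in $N$, and produces each $A_i$ as a union of at most $d$ intervals rather than $N$. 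The paper's interleaved construction, on the other hand, yields sets $A_i$ that are spread across the whole interval $[\lmin,\lmax]$; this is not required by the statement but may be of independent interest for banded models.

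One minor wording point: you cannot literally \emph{partition} $J_k$ into pairwise disjoint \emph{closed} subintervals, since adjacent closed intervals share an endpoint and you explicitly need $A_i \cap A_{i'} = \emptyset$ when different colors draw from the same $J_k$. This is harmless: simply choose $M$ with the strict inequality $M\sum_{(i,j):\,\lambda_{i,j}=\mu_k} w_{i,j} < \tilde\mu(J_k)$ and leave small gaps between consecutive $A_{i,j}$, which the atomless full-support hypothesis permits while preserving the prescribed masses exactly.
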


\begin{proof}
    For any $\c_i$ there is $\c'_i \in \inter \TT$ and $f_i \ge 0$ such that $\norm{\c_i - \c'_i} \le \frac{\eps}{2}$, $\int_{[\lmin, \lmax]} f_i \d \tilde\mu = 1$, and $\c'_i = \c(f_i \mu) = \int_{[\lmin, \lmax]} f_i \eta \d \tilde\mu$
    (see Section~\ref{sec:param}). 
    Take $N \in \N$ and divide $[\lmin, \lmax]$ into $n N$ disjoint segments $I_{p, nN} = [a_{p-1, nN}, a_{p,nN})$, $p = \overline{1,nN}$, with $a_{p, n N} \eqset \lmin + \frac{p}{n N} (\lmax - \lmin)$. Assign disjoint sets of such segments to each $\c'_i$, so that the segments numbered $i + kn$, $k=\overline{0,N-1}$, correspond to $\c'_i$. Each such set of segments $I_{p,nN}$ will correspond to a set $A_i$: $A_i \subset \bigcup_{k = 0}^{N-1} I_{i + kn, nN}$.
    
    For each $i$, select $N$ segments $J_{i,k} \subset I_{i+kn, nN}$, $k=\overline{0,N-1}$, such that
    \[
    \int_{a_{kn,nN}}^{a_{kn+n,nN}} f_i d \tilde\mu = C_N \tilde\mu(J_{i,k}).
    \]
    Note that it is possible once 
    \[
    C_N \ge \max_{i,k} \frac{1}{\tilde{\mu}(I_{i+kn, nN})} \int_{a_{kn,nN}}^{a_{kn+n,nN}} f_i d \tilde\mu
    \]
    since $\tilde{\mu}$ is atomless.
    Define sets $A_i \eqset \bigcup_{k = 0}^{N-1} J_{i, k}$.
    It follows from the construction of $J_{i, k}$ that $\tilde\mu(A_i) = \frac{1}{C_N}$ for any $i \in \{1,\dots,n\}$.
    
    Let us introduce the following piecewise constant function:
    \[
    \eta_N \eqset \sum_{k=0}^{N-1} \eta_{k, N} \ind_{[a_{kn,nN}, a_{kn+n,nN})}, \quad \eta_{k, N} \eqset \eta(a_{kn, nN}).
    \]
    Since the function $\eta$ is uniformly continuous, $\eta_N$ converges to $\eta$ uniformly as $N \to \infty$.
    Using this approximation, we obtain for any $i \in \{1,\dots,n\}$ that
    \begin{align*}
        C_N \int_{A_i}\eta d \tilde\mu
        &= \sum_{k=0}^{N-1} \int_{J_{i, k}} C_N \eta d \tilde\mu 
        \approx \sum_{k=0}^{N-1} \int_{J_{i, k}} C_N \eta_N d \tilde\mu \\
        &= \sum_{k=0}^{N-1} C_N \tilde\mu(J_{i, k}) \eta_{k, N} 
        = \sum_{k=0}^{N-1} \int_{a_{kn,nN}}^{a_{kn+n,nN}} \eta_N f_i d \tilde\mu \\
        &= \int_{\lmin}^{\lmax} f_i \eta_N d \tilde\mu 
        \approx \int_{\lmin}^{\lmax} f_i \eta d \tilde\mu = \c'_i,
    \end{align*}
    which completes the proof.
\end{proof}

\begin{remark}
    Note that if some column $\bm{p}_i$ of $P$ does not belong to $\bar{\PP}$, then for any set $A_i \subset \Lambda$ and $v_i \ge 0$ 
    \[
    \norm*{\bm{p}_i - v_i \int_{A_i} \eta \d \tilde{\mu}} \ge d\left(\bm{p}_i, \bar{\PP}\right) \eqset \inf_{\c \in \bar{\PP}} \norm{\bm{p}_i - \c} > 0,
    \]
    i.e.\ $P$ cannot be approximated by a matrix of a banded model.
\end{remark}

\subsection{Von Mises model}\label{subsec:von_Mises}

The next lemma shows that the von Mises model given by \eqref{eq:von_mises} is in some sense the only family of form \eqref{eq:loglinear} satisfying properties \ref{item:scalable_refl}, \ref{item:periodic} (thus it necessarily is a generalized von Mises family \eqref{eq:gen_von_mises}), and able to approximate spectral colors.

\begin{proposition}
    Let $\mathcal{F}$ be a generalized von Mises family with continuous periodic function $h \colon [\lmin, \lmax] \to \R$ having only one maximum point on $[\lmin, \lmax]$. Assume $\mathcal{F}$ is closed under pointwise multiplication. Then it is the von Mises family, i.e.\ one can take $h(\lambda) = \cos \frac{2 \pi \lambda}{\lmax - \lmin}$.
    \label{prop:mises_uniqueness}
\end{proposition}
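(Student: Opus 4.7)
The plan is to translate closure under pointwise multiplication into a functional equation on $h$ and then analyze it via Fourier analysis on the circle of circumference $T \eqset \lmax - \lmin$. Since the product $f_{a_1,b_1,s_1} f_{a_2,b_2,s_2} = \exp\bigl(b_1 + b_2 + a_1 h(\cdot - s_1) + a_2 h(\cdot - s_2)\bigr)$ must lie in $\mathcal{F}$, for each $(a_1, s_1, a_2, s_2)$ there exist $a, s, \gamma \in \R$ (depending on these four parameters) with
\[
a_1 h(\lambda - s_1) + a_2 h(\lambda - s_2) = a \, h(\lambda - s) + \gamma \qquad \text{for all } \lambda.
\]
Writing $h(\lambda) = \sum_{n \in \Z} \hat{h}(n) \, e^{2 \pi i n \lambda / T}$ and matching Fourier coefficients at each $n \neq 0$ yields
\[
\bigl(a_1 e^{-2\pi i n s_1 / T} + a_2 e^{-2\pi i n s_2 / T}\bigr) \hat{h}(n) = a \, e^{-2\pi i n s / T} \hat{h}(n).
\]

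The key step is to show that only one positive frequency carries a nonzero Fourier coefficient. Specializing to $a_1 = a_2 = 1$ and $s_1 = 0$, the identity above forces $|a| = |1 + e^{-2\pi i n s_2 / T}|$ for every positive $n$ with $\hat{h}(n) \neq 0$. Because the same $(a,s)$ must satisfy all these relations at once, the quantity $|1 + e^{-2\pi i n s_2/T}|^2 = 2 + 2\cos(2\pi n s_2 / T)$ must be independent of such $n$ as $s_2$ varies over $\R$; differentiating twice in $s_2$ at $s_2 = 0$ then gives $n^2 = (n')^2$ for any two such frequencies, so exactly one $n_0 \geq 1$ survives. Combining this with the reality constraint $\hat{h}(-n_0) = \overline{\hat{h}(n_0)}$ and the constant term, one obtains
\[
h(\lambda) = \hat{h}(0) + A \cos\!\left(\frac{2 \pi n_0 \lambda}{T} - \phi\right)
\]
for some $A > 0$ and phase $\phi \in \R$. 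The degenerate case in which $\hat{h}(n) = 0$ for every $n \neq 0$ is ruled out, since then $h$ would be constant and admit no unique maximum.

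The unique-maximum hypothesis pins down $n_0 = 1$, because $\cos\bigl(2\pi n_0 \lambda / T - \phi\bigr)$ attains its maximum at $n_0$ distinct points of $[\lmin, \lmax]$. A final reparametrization absorbs $\phi$ into the shift $s$, the amplitude $A$ into the scale parameter $a$, and the constant $\hat{h}(0)$ into $b$, reducing $h$ to the canonical $\cos(2\pi \lambda / T)$ and identifying $\mathcal{F}$ with the von Mises family \eqref{eq:von_mises}.

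The main obstacle I anticipate is making the Fourier step fully rigorous: one must check that a single pair $(a, s)$ genuinely works at every Fourier mode simultaneously (not mode by mode) and verify that the modulus identity across distinct positive integer frequencies truly forces them to coincide. The remaining steps — Fourier-coefficient matching, the argument from unique maximum, and the reparametrization — are straightforward once the reduction to a single nonzero frequency is in place.
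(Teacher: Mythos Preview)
Your proof is correct and follows essentially the same route as the paper: reduce closure under multiplication to a functional equation on $h$, pass to Fourier coefficients, argue that only one positive frequency can survive, and then use the unique-maximum hypothesis to force $n_0 = 1$. The only technical difference is in the step isolating a single frequency: the paper fixes an irrational shift $s_1$ and uses that $|1 + a_1 z| = a_2$ has at most two (conjugate) unimodular solutions, whereas you let the shift vary and differentiate the modulus identity --- both arguments are short and equally valid.
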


\begin{proof}
    W.l.o.g.\ assume $\lmax - \lmin = 2 \pi$.
    Consider the Fourier series for $h$:
    \[
    h(\lambda) = c_0 + \sum_{n \in \Z} c_n e^{i n \lambda} \text{~~a.e.},
    \]
    where $c_n = \overline{c_{-n}} \in \C$.
    Closedness under pointwise multiplication implies that for any $a_1 > 0$, $s_1 \in [\lmin, \lmax]$ there are $a_2 \ge 0$, $s_2 \in [\lmin, \lmax]$, $b_2 \in \R$ such that
    \[
    h(\lambda) + a_1 h(\lambda - s_1) = a_2 h(\lambda - s_2) + b_2.
    \]
    Thus their Fourier series coincide:
    \[
    (1 + a_1) c_0 + \sum_{n \in \Z} (1 + a_1 e^{- i n s_1}) c_n e^{i n \lambda}
    = a_2 c_0 + b_2 + \sum_{n \in \Z} a_2 e^{- i n s_2} c_n e^{i n \lambda} \text{~~a.e.},
    \]
    i.e.\ $b_2 = (1 + a_1 - a_2) c_0 + b_1$ and 
    \[
    (1 + a_1 e^{- i n s_1}) c_n = a_2 e^{- i n s_2} c_n \text{~~for all~~} n \in \N.
    \]
    Note that the equation $\abs{1 + a_1 z} = a_2$ has at most two solutions $z \in \C$, and they are conjugated. Then taking $s_1$ such that $\frac{s_1}{\pi}$ is irrational, we conclude that there is at most one $n = n_0 \in \N$ such that $c_n \neq 0$. Thus
    \[
    h(\lambda) = c_0 + c_{n_0} e^{i n_0 \lambda} + \overline{c_{n_0}} e^{- i n_0 \lambda} = c_0 + A \sin(n_0 (\lambda - \lambda_0)).
    \]
    Since $h$ has one maximum point on $[\lmin, \lmax]$, we get $n_0 = 1$. The claim follows.
\end{proof}

The next proposition links the von Mises model to the Gaussian one.

\begin{proposition}
    Fix $\lmin$, $\lmax$, and $D > 0$. Denote by $\mathcal{F}_{\Delta \lambda, D}$ the following subfamily of the von Mises model with the width $\Delta \lambda > 0$ restricted to $[\lmin, \lmax]:$
    \[
    \mathcal{F}_{\Delta \lambda, D} \eqset \left\{f_{a,b,s} : 0 \le a \le D,\; b \le D \right\}, \text{~~where $f_{a,b,s}$ comes from \eqref{eq:von_mises}}.
    \]
    In the limit $\Delta \lambda \to \infty$ it converges to (reciprocal) Gaussians in a sense that for any $f \in \mathcal{F}_{\Delta \lambda, D}$ there are $\alpha, \beta, \gamma \in \R$ such that
    \[
    \max_{\lmin \le \lambda \le \lmax} \abs*{f(\lambda) - \exp\left(\alpha \lambda^2 + \beta \lambda + \gamma\right)} \le \eps(\Delta \lambda) \to 0 \text{~~as~~} \Delta \lambda \to \infty.
    \]
    \label{prop:mises_gauss}
\end{proposition}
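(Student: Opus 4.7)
The plan is to Taylor expand the cosine around a reference point and to observe that the argument varies slowly over $[\lmin, \lmax]$ when $\Delta\lambda$ is large. Specifically, for $\lambda \in [\lmin, \lmax]$ the argument $u(\lambda) \eqset \frac{2\pi(\lambda - s)}{\Delta\lambda}$ ranges over an interval of length $\frac{2\pi(\lmax - \lmin)}{\Delta\lambda} \to 0$, independently of $s$. Hence $\cos u(\lambda)$ is uniformly well approximated on $[\lmin, \lmax]$ by its quadratic Taylor polynomial around any fixed reference point, so the exponent of $f_{a,b,s}$ becomes a quadratic polynomial in $\lambda$ up to an error shrinking like $\Delta\lambda^{-3}$.

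Concretely, I would fix an arbitrary $\lambda_0 \in [\lmin, \lmax]$ and set $u_0 \eqset \frac{2\pi(\lambda_0 - s)}{\Delta\lambda}$, so that $u(\lambda) - u_0 = \frac{2\pi}{\Delta\lambda}(\lambda - \lambda_0)$. By Taylor's theorem with Lagrange remainder,
\[
\cos u = \cos u_0 - \sin(u_0)(u - u_0) - \tfrac{1}{2}\cos(u_0)(u - u_0)^2 + r(u), \quad |r(u)| \le \tfrac{1}{6}|u - u_0|^3.
\]
Substituting $u - u_0 = \frac{2\pi(\lambda - \lambda_0)}{\Delta\lambda}$ and collecting powers of $\lambda$, the exponent takes the form
\[
b + a\cos u(\lambda) = \alpha \lambda^2 + \beta \lambda + \gamma + a\,r(u(\lambda)),
\]
with $\alpha = -\tfrac{2\pi^2 a \cos u_0}{\Delta\lambda^2}$ and analogous explicit formulas for $\beta, \gamma$ in terms of $a, b, s, \lambda_0, \Delta\lambda$. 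The remainder obeys the uniform bound
\[
|a\,r(u(\lambda))| \le \tfrac{D}{6}\left(\tfrac{2\pi(\lmax - \lmin)}{\Delta\lambda}\right)^3 \eqset \delta(\Delta\lambda) \to 0.
\]

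Finally, I would transfer the error from the exponent to the function itself via the factorization
\[
\abs*{f_{a,b,s}(\lambda) - \exp(\alpha \lambda^2 + \beta \lambda + \gamma)} = f_{a,b,s}(\lambda)\,\abs*{1 - e^{-a\,r(u(\lambda))}}.
\]
Combining the uniform bound $f_{a,b,s}(\lambda) \le e^{a + b} \le e^{2D}$ (since $a \ge 0$, $a \le D$, $b \le D$, $\cos u \le 1$) with the elementary inequality $|1 - e^{-t}| \le e^{|t|} - 1$ yields
\[
\abs*{f_{a,b,s}(\lambda) - \exp(\alpha\lambda^2 + \beta\lambda + \gamma)} \le e^{2D}\bigl(e^{\delta(\Delta\lambda)} - 1\bigr) \eqset \eps(\Delta\lambda),
\]
which depends only on $D$, $\lmin$, $\lmax$, $\Delta\lambda$ and tends to zero. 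The only delicate point is that the expansion point $u_0$ must depend on $s$ so that $|u - u_0|$ is controlled uniformly in the parameter $s$; once this is set up, the remainder of the argument is essentially mechanical.
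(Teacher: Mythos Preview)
Your proof is correct and follows essentially the same route as the paper: Taylor expand the cosine to second order so that the exponent becomes a quadratic in $\lambda$ plus a remainder of order $(\Delta\lambda)^{-3}$, then push this error through the exponential using the uniform bound $f\le e^{2D}$. The paper simply expands in $\lambda$ around $\lambda=0$ (which, in your variables, amounts to taking $\lambda_0=\lmin$ and letting $u_0$ depend on $s$ automatically) and handles the last step with big-$O$ notation rather than your explicit factorization; your final caveat about the dependence of the expansion point on $s$ is therefore not a genuine subtlety once $\lambda_0\in[\lmin,\lmax]$ is fixed.
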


\begin{proof}
    For simplicity consider $\lmin = 0$, $\lmax = 1$. Fix $\Delta \lambda > 1$ and denote $A \eqset \frac{2 \pi}{\Delta \lambda}$.
    Using the Taylor theorem at the point $\lambda = 0$ we obtain that
    \[
    \cos\left(\frac{2 \pi (\lambda - s)}{\Delta \lambda}\right) = \cos(A s) + A \lambda \sin(A s) - \frac{A^2 \lambda^2}{2} \cos(A s) + O\left(\left(\frac{\lambda}{\Delta \lambda}\right)^3\right).
    \]
    Therefore, on $[0, 1]$ we have
    \begin{align*}
        f_{a,b,s}(\lambda) &= \exp\left[b + a \cos(A s) + A a \lambda \sin(A s) - \frac{A^2 a \lambda^2}{2} \cos(A s) + O\left(\frac{a}{(\Delta \lambda)^3}\right)\right] \\
        &= \exp\left[\alpha \lambda^2 + \beta \lambda + \gamma\right] \exp\left[O\left(\frac{D}{(\Delta \lambda)^3}\right)\right],
    \end{align*}
    where $\alpha = - \frac{A^2 a}{2} \cos(A s)$, $\beta = A a \sin(A s)$, $\gamma = b + a \cos(A s)$. Finally, note that $\max_{0 \le \lambda \le 1} \left(\alpha \lambda^2 + \beta \lambda + \gamma\right) = O(D)$, and $\exp\left[O\left(\frac{D}{(\Delta \lambda)^3}\right)\right] = O(1)$ once $\Delta \lambda \to \infty$, thus the claim follows.
\end{proof}

\section{On parametrization of colors and coverage of a color triangle}\label{sec:param}

In this section we consider only the case of $d = 3$, what corresponds to a standard observer and most cameras.
For the sake of simplicity we assume $\lmin = 0$, $\lmax = 1$.

Having found out that only the banded model can simultaneously satisfy the properties of closure under multiplication and addition it makes no sense to look for any model with the same two properties and with a wider coverage of chromaticity values on a color triangle (including strictly convex case). 
Getting rid of closure under multiplication seems to be not promising~--- in this way it is impossible to reproduce saturated colors, which are characterized by sharpened spectra. 
Other way is to get rid of the closure under addition, which is characteristic of linear models, and consider models based on the exponential functions. 
Since the Gaussian model is the limiting case of the von Mises model, the main purpose of this section will be to show that the von Mises model completely covers the color triangle in both the case of convex and non-convex spectral locus.

Here we study the question of colors parametrization (modulo intensity). 
Namely, given a response function $\chi$ and a \emph{reference SPD} $\mu \in \MM_+(\Lambda)$ we want to describe conditions under which for some (parametric) family of nonnegative spectral densities $\mathcal{F} \subset L^1(\mu)$ the map
\begin{equation}\label{eq:c_mu}
    \c_\mu(f) \eqset \c(f \mu) = \int_{[\lmin, \lmax]} f \eta \d \tilde\mu = \int_{\Lambda} f \chi \d \mu \in \PP
\end{equation}
is onto, one-to-one, or a bijection to the color triangle $\TT$ (thus the same holds for multiples of $f \in \mathcal{F}$ and the color cone $\PP$).
In terms of spectral models it means completeness and injectivity (properties \ref{item:complete} and \ref{item:injective}) of the set of illuminants $\SS = \left\{f \mu : f \in \mathcal{F} \right\}$.
Note that in the general case a linear model with $\mathcal{F} = \cone\{f_1, \dots, f_n\} \eqset \left\{\sum_{i = 1}^n v_i f_i : v_i \ge 0\right\}$ and non-negative basis functions $f_i$, $i = \overline{1, n}$, is not complete: indeed, it covers only a polygonal subset of the color triangle.
In this section we consider some families of densities including generalized von Mises families which cover the color triangle under suitable assumptions on the map $\eta$.

\begin{remark}
    If $f \ge 0$ and $\int_{\Lambda} f \d \tilde\mu = 1$, then $\c_\mu(f) \in \conv\bigl(\eta([\lmin, \lmax])\bigr)$.
\end{remark}

\begin{proof}
    It immediately follows from the proof of Lemma~\ref{lem:convex_hull}.
\end{proof}

Let $\tilde\mu(\Lambda) = 1$ and $\supp \tilde\mu = [\lmin, \lmax]$, i.e.\ $\tilde\mu([a, b]) > 0$ for any $\lmin \le a < b \le \lmax$.
It will be useful to consider a $1$-dimensional torus $\mathbb{T} = \mathbb{T}(\lmin, \lmax)$ as $[\lmin, \lmax]$ with identified points $\lmin$ and $\lmax$. We also define a closed cyclic interval as follows:
\[
[\lmin, \lmax] \supset [a, b]_{\mathbb{T}} = \begin{cases}
    [a, b], & a \le b, \\
    [a, \lmax] \cup [\lmin, b], & a > b.
\end{cases}
\]
Note that it is a subset of $[\lmin, \lmax]$, not of $\mathbb{T}$. 
In a similar way we can define an open cyclic interval $(a, b)_{\mathbb{T}}$, and half-open intervals $[a, b)_{\mathbb{T}}$, $(a, b]_{\mathbb{T}}$.

Since $A = \left\{\bm{x} \in \R^3 : \langle \bm{x}, \one \rangle = 1\right\}$ is $2$-dimensional, we can define an angle $\angle(\c', \c)$ from $\c \in A$ to $\c' \in A$ fixing an arbitrary direction and orientation of $A$.

\begin{figure}
    \centering
    \includegraphics[scale=0.48]{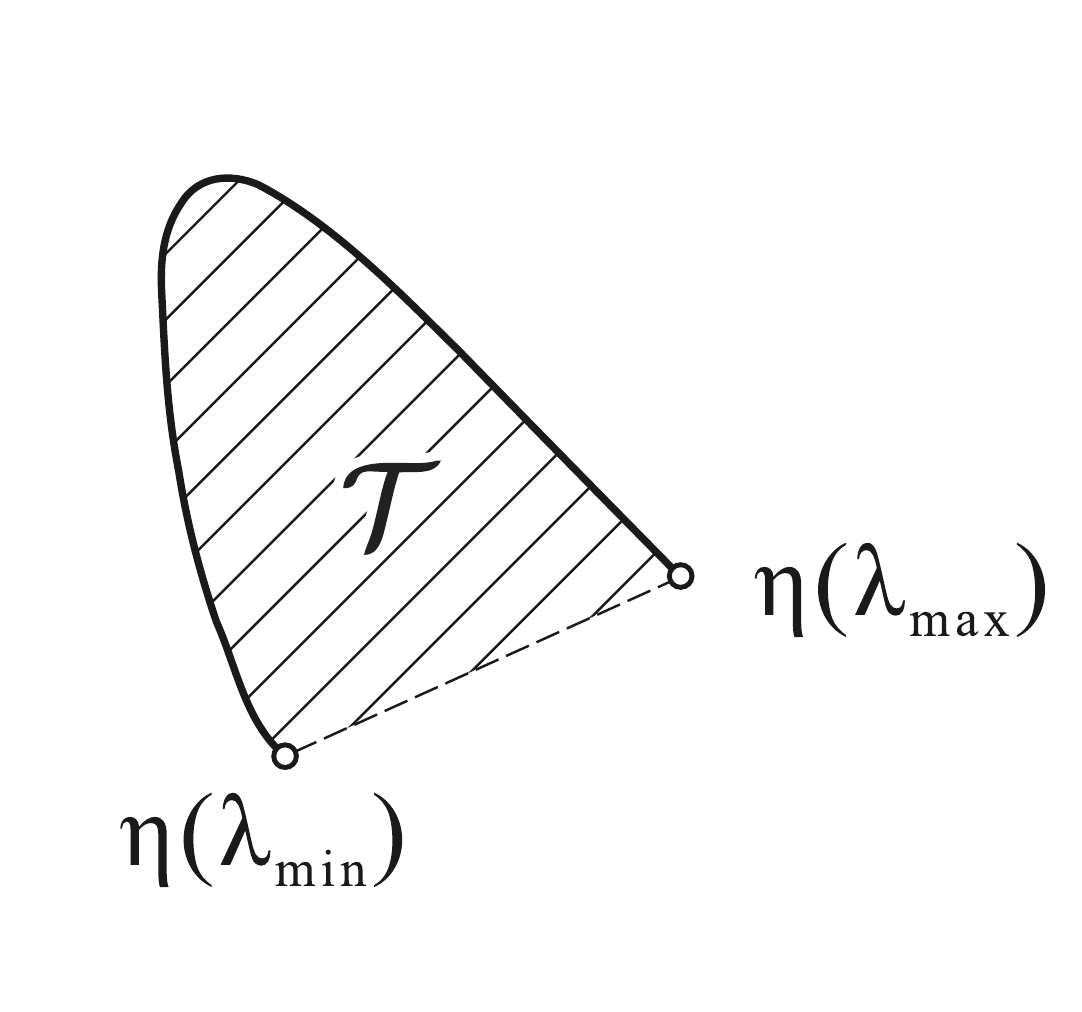}
    \includegraphics[scale=0.39]{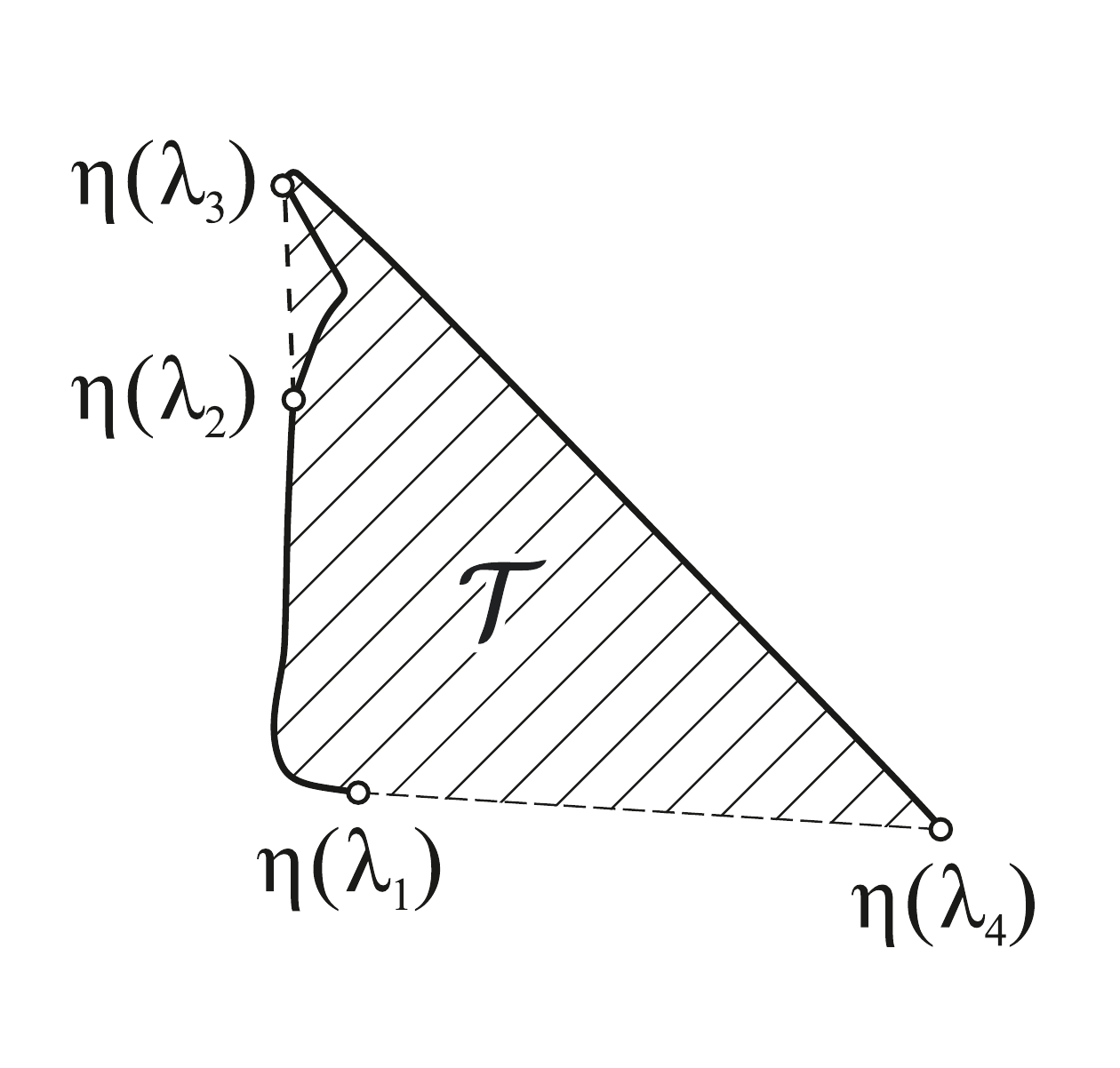}
    \caption{\centering Coverage using the von Mises model of the color triangle of a standard observer CIE 1931 and Nikon D90.}
    \label{fig:colortriangle}
\end{figure}

\begin{definition}\label{def:convex_locus}
    We say a spectral locus is convex if 
    \begin{equation}\label{eq:T_boundary}
        \eta([0, 1]) \subset \partial \TT,
    \end{equation}
    and for any $\c \in \inter \TT$ (a continuous version of) the angle from $\c$ to $\eta(\lambda)$ is a monotone function on $[0, 1]$ with $\abs*{\angle(\eta(0), \c) - \angle(\eta(1), \c)} \le 2 \pi$.
    
    If, in addition, $\eta(\lambda) \notin [\eta(a), \eta(b)]$ for any $a \neq b \neq \lambda \in [0, 1]$, then the spectral locus is strictly convex. Here $[\bm{x}, \bm{y}] \subset \R^3$ denotes the segment of the line connecting $\bm{x}$ and $\bm{y}$.
\end{definition}

It is important to note that the spectral locus for the standard observer CIE 1931 (Fig. \ref{fig:colortriangle}) is considered to be convex \cite{logvinenko2009object}.
Now we discuss some important properties of a convex or strictly convex spectral locus.

\begin{lemma}\label{lem:convex_locus_halfplane}
    If the spectral locus is convex, then for any closed half-plane $H \subset A$ there is a (closed) cyclic interval $I_H \subset [0, 1]$ such that
    \[
    \eta^{-1}(H) \eqset \bigl\{\lambda \in [0, 1] : \eta(\lambda) \in H\bigr\} = I_H.
    \]
    Moreover, $\partial \TT = \eta([0, 1]) \sqcup (\eta(0), \eta(1))$.
\end{lemma}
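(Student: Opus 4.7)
The strategy is to convert half-plane membership into arc-membership on the circle via the monotone angle function from Definition~\ref{def:convex_locus}. For the first claim, assume first that the line $\ell \eqset \partial H$ meets $\inter \TT$, and pick $\c \in \ell \cap \inter \TT$; let $\theta \colon [0,1] \to \R$ be the continuous lift of $\angle(\eta(\lambda) - \c)$, which is well defined because $\eta([0,1]) \subset \partial \TT$ while $\c \in \inter \TT$, so $\eta(\lambda) \neq \c$. Since $\c$ lies on the bounding line, the inequality $\langle n, \eta(\lambda) - \c \rangle \ge 0$ defining $H$ reduces to $\cos(\theta(\lambda) - \phi) \ge 0$, i.e.\ $\theta(\lambda) \bmod 2\pi \in [\phi - \pi/2, \phi + \pi/2]$ for $\phi$ the angle of the inward normal of $H$. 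Because $\theta$ is monotone with range of length at most $2\pi$, its image can intersect the union of translated semicircles $\bigcup_{k \in \Z} [\phi - \pi/2 + 2\pi k, \phi + \pi/2 + 2\pi k]$ in at most two closed subintervals, and when there are two they must contain $\theta(0)$ and $\theta(1)$ respectively, giving preimages $[0, b] \cup [a, 1]$, precisely a cyclic interval $[a, b]_{\mathbb{T}}$ with $a > b$. Otherwise one obtains an ordinary closed subinterval. The degenerate cases $\ell \cap \inter \TT = \emptyset$ split into $\bar{\TT} \subset H$ (whence $\eta^{-1}(H) = [0,1]$) and the supporting-line case, where $\eta^{-1}(H) = \eta^{-1}(\ell \cap \bar{\TT})$; choosing any $\c' \in \inter \TT$, the monotone lift $\theta$ from $\c'$ shows that the preimage of the ray through $\ell \cap \bar{\TT}$ is a closed interval, as required.

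For the second claim, Lemma~\ref{lem:convex_hull} combined with convexity of $\TT = \PP \cap A$ and $\inter \TT \neq \emptyset$ gives $\bar{\TT} = \conv(\eta([0,1]))$, so $\partial \TT = \partial \conv(\eta([0,1]))$. The inclusion $\eta([0,1]) \subset \partial \TT$ is built into Definition~\ref{def:convex_locus}. To establish $(\eta(0), \eta(1)) \subset \partial \TT$, I would argue by contradiction: if the midpoint $m$ of the chord lay in $\inter \TT$, taking $\c = m$ makes $\eta(0), \c, \eta(1)$ collinear with $\c$ strictly between them, forcing $\abs{\theta(1) - \theta(0)} = \pi$; then monotonicity of $\theta$ confines the entire image $\eta([0,1])$ to one closed half-plane bounded by the chord line, which makes $\c$ a boundary point of $\conv(\eta([0,1])) = \bar{\TT}$, contradicting $\c \in \inter \TT$. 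Hence $[\eta(0), \eta(1)] \subset \partial \TT$, and $\eta([0,1]) \cup [\eta(0), \eta(1)]$ is a simple closed curve sitting inside the Jordan boundary $\partial \TT$, so they coincide. Disjointness of $\eta([0,1])$ from the open chord follows from a similar analysis: if $\eta(\lambda_0) \in (\eta(0), \eta(1))$ for some $\lambda_0 \in (0, 1)$, choosing $\c \in \inter \TT$ close to the chord produces an angle function that must have a turning point near $\lambda_0$, violating monotonicity.

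The main obstacle I anticipate is the \emph{wrap-around} case in the first claim: proving rigorously that when $\eta^{-1}(H)$ genuinely splits into two nontrivial components, these must be anchored at $\lambda = 0$ and $\lambda = 1$. The sharp bound $\abs{\theta(1) - \theta(0)} \le 2\pi$ is essential here, since it means the range of $\theta$ is at most one full period, leaving room for at most one interior gap in the preimage; any two components of the preimage therefore occupy the two extremes of $[0,1]$. A secondary technical point is the disjointness argument in the second claim, which requires care when the locus is non-strictly convex, and the supporting-line subcase of the first claim, both of which may need approximation by nearby lines crossing $\inter \TT$ followed by passing to a limit.
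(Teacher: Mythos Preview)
Your plan is sound and genuinely different from the paper's proof in both parts. For the half-plane claim, the paper fixes a \emph{single} $\c\in\inter\TT$ independent of $H$, parametrizes $\partial\TT$ bijectively by angle from $\c$ via $\bm{b}\colon 2\pi\mathbb{T}\to\partial\TT$, notes that any closed half-plane meets $\partial\TT$ in a cyclic arc $J_H$, and pulls this back along the monotone map $\lambda\mapsto\angle(\eta(\lambda),\c)\in[0,\theta]$. Your device of placing $\c$ on $\partial H\cap\inter\TT$ and reducing to a semicircle condition on $\theta(\lambda)$ is more elementary (no global parametrization of $\partial\TT$), at the price of the supporting-line case split you flagged. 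For the boundary decomposition, the paper first shows the locus angle range $\theta>\pi$ and then argues the chord line is a \emph{supporting} line of $\TT$ (otherwise some $\eta(\lambda)$ on the far side would force $\eta(0)$ or $\eta(1)$ into $\inter\TT$); your midpoint contradiction is a clean alternative and extends verbatim to every point of the open chord.

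One correction: your disjointness sketch is misdirected. Taking $\c$ close to the chord does \emph{not} produce a turning point of $\theta$; the angle can still be monotone. The actual obstruction is an angle-range mismatch: once you know the locus spans the long arc $[\theta(0),\theta(1)]$ of length $>\pi$ from some fixed $\c\in\inter\TT$, any point of the open chord $(\eta(0),\eta(1))$ is seen from $\c$ under an angle in the complementary short arc (since $\c$ is not on the chord line and the segment subtends less than $\pi$), hence cannot equal $\theta(\lambda_0)$ for any $\lambda_0\in(0,1)$. That is precisely the paper's argument via~\eqref{eq:segment_angles}. Your Jordan-curve finish is a legitimate topological shortcut, but note it presupposes disjointness (to get simplicity), so fix the disjointness step first.
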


\begin{proof}
    Fix $\c \in \inter \TT$.
    W.l.o.g.\ assume that $\angle(\eta(0), \c) = 0$, $\angle(\eta(1), \c) = \theta$.
    Since $\TT$ is convex and bounded there exists a continuous bijection $\bm{b} \colon 2 \pi \mathbb{T} \to \partial \TT$ (here $2 \pi \mathbb{T}$ denotes the interval $[0, 2 \pi]$ with identified endpoints) such that $\angle(\bm{b}(\varphi), \c) = \varphi$ for any $\varphi \in 2 \pi \mathbb{T}$. Note that the inverse of $\bm{b}$ is also continuous.
    
    It is easy to see from a geometrical consideration that due to convexity of $\TT$ for any closed half-plane $H \subset A$ the set $\left\{\varphi \in [0, 2 \pi] : \bm{b}(\varphi) \in H\right\}$ is a closed cyclic interval $J_H$. Then intersection $J_H \cap [0, \theta]$ is also a closed cyclic interval in $[0, \theta]$, and we get from the monotonicity of $\angle(\eta(\lambda), \c)$ that
    \[
    \eta^{-1}(H) = \bigl\{\lambda \in [0, 1] : \angle(\eta(\lambda), \c) \in J_H\bigr\}
    \]
    is a closed cyclic interval in $[0, 1]$.
    
    Now note that $\theta > \pi$, otherwise $\eta([0, 1])$, and hence $\TT$, is contained in a convex circular segment with a center at $\c$ thus $\c \notin \inter \TT$.
    Suppose $\eta(0) \neq \eta(1)$ and consider the segment $(\eta(0), \eta(1))$. Since $\theta = \angle(\eta(1), \c) - \angle(\eta(0), \c) > \pi$ we obtain that $\c \notin (\eta(0), \eta(1))$, and thus 
    \begin{equation}\label{eq:segment_angles}
        \left\{\angle(\c', \c) : \c \in (\eta(0), \eta(1))\right\} = (\angle(\eta(1), \c), \angle(\eta(0) + 2 \pi) = (\theta, 2 \pi).
    \end{equation}
    Let $H$ be the closed half-plane containing $\c$ and with $\partial H$ passing through $\eta(0)$, $\eta(1)$. Assume that $\TT \setminus H \neq \emptyset$. Then there is $\lambda \in (0, 1)$ such that $\eta(\lambda) \in \TT \setminus H$. But then either $\angle(\eta(\lambda), \c) < \pi$ and 
    \[
    \eta(1) \in \inter \conv\{\c, \eta(0), \eta(\lambda)\} \subset \inter \TT,
    \]
    or $\angle(\eta(\lambda), \c) > \theta + \pi$ and 
    \[
    \eta(0) \in \inter \conv\{\c, \eta(1), \eta(\lambda)\} \subset \inter \TT.
    \]
    This contradicts to the fact that $\eta([0, 1]) \subset \partial \TT$.
    Therefore, $\partial H$ is a supporting line of $\TT$, and thus $(\eta(0), \eta(1)) \subset \partial \TT$. Finally, \eqref{eq:segment_angles} yields that $\eta([0,1]) \cap (\eta(0), \eta(1)) = \emptyset$.
    The claim follows.
\end{proof}

\begin{lemma}
    If the spectral locus is convex, then for any $a < b \in [0, 1]$ 
    \[
    \conv\bigl(\eta([a, b]_{\mathbb{T}})\bigr) \cap \conv\bigl(\eta([b, a]_{\mathbb{T}})\bigr) = [\eta(a), \eta(b)].
    \]
\end{lemma}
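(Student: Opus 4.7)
The inclusion $[\eta(a), \eta(b)] \subset \conv(\eta([a, b]_{\mathbb{T}})) \cap \conv(\eta([b, a]_{\mathbb{T}}))$ is immediate because both arcs contain the endpoints $\eta(a)$ and $\eta(b)$ and each convex hull is convex. The real work is the reverse inclusion, and my plan is to separate the two arcs by the chord through $\eta(a), \eta(b)$ and then read off the intersection on that chord, invoking Lemma~\ref{lem:convex_locus_halfplane}. Assume first that $\eta(a) \neq \eta(b)$; the degenerate case $\eta(a) = \eta(b)$ can be handled by the same idea applied to nearby non-degenerate parameter pairs together with a continuity argument. Let $\ell \subset A$ be the line through $\eta(a)$ and $\eta(b)$, with closed half-planes $H^{+}, H^{-}$, and set $I^{\pm} \eqset \eta^{-1}(H^{\pm})$. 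By Lemma~\ref{lem:convex_locus_halfplane} each $I^{\pm}$ is a closed cyclic interval containing $\{a, b\}$, so each equals one of $[a, b]_{\mathbb{T}}$, $[b, a]_{\mathbb{T}}$, or $[0, 1]$; moreover $I^{+} \cup I^{-} = [0, 1]$ and $I^{+} \cap I^{-} = \eta^{-1}(\ell)$. The subcase $I^{+} = I^{-} = [0, 1]$ is excluded since it would force $\TT \subset \ell$, contradicting $\inter \TT \neq \emptyset$.

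In the separating case $\{I^{+}, I^{-}\} = \{[a, b]_{\mathbb{T}}, [b, a]_{\mathbb{T}}\}$, the two arcs lie in opposite half-planes, so the intersection of their convex hulls is contained in $\ell$. Since $\eta^{-1}(\ell) = I^{+} \cap I^{-} = \{a, b\}$, the standard supporting-hyperplane identity $\conv(X) \cap \ell = \conv(X \cap \ell)$ (valid whenever $X$ is compact and lies in a closed half-plane bounded by $\ell$) gives $\conv(\eta([a, b]_{\mathbb{T}})) \cap \ell = [\eta(a), \eta(b)]$, which finishes this case. In the supporting-line case one of $I^{+}, I^{-}$ equals $[0, 1]$, say $I^{+} = [0, 1]$, so that $\ell$ supports $\bar\TT = \conv(\eta([0, 1]))$ from the $H^{+}$ side and $\eta^{-1}(\ell) = I^{-}$ is either $[a, b]_{\mathbb{T}}$ or $[b, a]_{\mathbb{T}}$. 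Taking without loss of generality $I^{-} = [a, b]_{\mathbb{T}}$, so that $\eta([a, b]_{\mathbb{T}}) \subset \ell$, the plan is to note that any $\c \in \inter \TT$ lies off $\ell$ (as $\ell$ supports $\TT$), whence $\angle(\cdot, \c)$ is a strictly monotone function of position along $\ell$; combined with the monotonicity of $\lambda \mapsto \angle(\eta(\lambda), \c)$ from Definition~\ref{def:convex_locus}, this forces $\eta$ on $[a, b]$ to be monotone in position on $\ell$, and continuity together with the prescribed endpoint values gives $\eta([a, b]_{\mathbb{T}}) = [\eta(a), \eta(b)]$. Then $\conv(\eta([a, b]_{\mathbb{T}})) = [\eta(a), \eta(b)]$, and the intersection of the two convex hulls is forced into this segment.

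The delicate point I anticipate is the supporting-line case: here $\ell$ does not geometrically separate the two arcs, and the argument depends crucially on angle monotonicity (not merely planar topology) to rule out that $\eta$ on $[a, b]_{\mathbb{T}}$ might "overshoot" past $\eta(a)$ or $\eta(b)$ along the flat portion of $\partial \TT$ on which the arc is trapped.
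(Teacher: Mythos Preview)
Your plan is more involved than necessary and contains a genuine gap. The assertion ``each $I^{\pm}$ is a closed cyclic interval containing $\{a,b\}$, so each equals one of $[a,b]_{\mathbb{T}}$, $[b,a]_{\mathbb{T}}$, or $[0,1]$'' is false: a closed cyclic interval containing $a$ and $b$ need only \emph{contain} one of the two arcs $[a,b]_{\mathbb{T}}$, $[b,a]_{\mathbb{T}}$, not equal it. For instance, if the locus happens to be flat (angle stationary) on a short interval $[a-\eps,a]$ so that $\eta$ is constant there, then $I^{+}$ can be $[a-\eps,b]$. Consequently your deduction $\eta^{-1}(\ell)=I^{+}\cap I^{-}=\{a,b\}$ in the ``separating case'' breaks down, and the supporting-hyperplane identity then yields only $\conv(\eta([a,b]_{\mathbb{T}}))\cap \ell=\conv(\eta([a,b]_{\mathbb{T}})\cap\ell)$, which could in principle exceed $[\eta(a),\eta(b)]$. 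The fix is exactly the angle-monotonicity trick you reserved for the supporting-line case: it shows $\eta([a,b])\cap\ell\subset[\eta(a),\eta(b)]$ in all cases, so once you establish containment (not equality) you are fine. But then your two cases collapse and the degenerate case $\eta(a)=\eta(b)$ still needs separate treatment.

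The paper sidesteps the whole case split with a one-line contradiction: given $\c$ in the intersection but off the segment $[\eta(a),\eta(b)]$, choose any closed half-plane $H$ with $\eta(a),\eta(b)\in H$ and $\c\notin H$ (possible by separating $\c$ from the compact convex segment). Lemma~\ref{lem:convex_locus_halfplane} makes $\eta^{-1}(H)$ a cyclic interval containing $a,b$, hence containing one of $[a,b]_{\mathbb{T}}$, $[b,a]_{\mathbb{T}}$; thus one of the two convex hulls, and therefore their intersection, lies in $H$, contradicting $\c\notin H$. This argument never fixes the chord $\ell$, never needs $\eta(a)\neq\eta(b)$, and never needs the supporting-line or monotonicity subcase.
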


\begin{proof}
    Fix $a < b \in [0, 1]$ and assume there is 
    \[
    \c \in \Bigl[\conv\bigl(\eta([a, b]_{\mathbb{T}})\bigr) \cap \conv\bigl(\eta([b, a]_{\mathbb{T}})\bigr)\Bigr] \setminus [\eta(a), \eta(b)].
    \]
    Take a half-plane $H$ such that $c \notin H$, $\eta(a), \eta(b) \in H$ and a corresponding cyclic interval $I_H$ from Lemma~\ref{lem:convex_locus_halfplane}. Since $a, b \in H$ we have either $[a, b]_{\mathbb{T}} \subset I$ or $[b, a]_{\mathbb{T}} \subset I_H$, thus
    \[
    \conv\bigl(\eta([a, b]_{\mathbb{T}})\bigr) \cap \conv\bigl(\eta([b, a]_{\mathbb{T}})\bigr) \subset \conv\bigl(\eta(I_H)\bigr) \subset H.
    \]
    Therefore, $\c \in H$, and we got a contradiction.
\end{proof}

Let us also mention a simple sufficient condition for convexity of a spectral locus.
Let $\eta \in C^1([0,1])$. Denote by $\varphi(\bm{x})$ the directional angle of $\bm{x} \in \mathring{A} \eqset A - (1, 0, 0)$. If the function $\varphi(\eta'(\lambda))$ is increasing and 
\[
\varphi(\eta(1) - \eta(0)) \le \varphi(\eta'(0)) \le \varphi(\eta'(1)) \le \varphi(\eta(1) - \eta(0)) + 2 \pi,
\]
then the spectral locus is convex.

We say that a $\tilde\mu$-measurable function $f$ on $[0, 1]$ changes sign twice on $[0, 1]$, if $\tilde\mu\left(\{f > 0\}\right) > 0$, $\tilde\mu\left(\{f < 0\}\right) > 0$, and there exists a cyclic interval $I \subset [0, 1]$ such that both $I$ and $[0, 1] \setminus I$ have nonempty interior, $f \ge 0$ on $I$, and $f \le 0$ outside $I$.
The next lemma is useful to show that $\c_\mu$ on some families is one-to-one.

\begin{lemma}\label{lem:injective}
    Let the spectral locus be strictly convex. Take $\tilde\mu$-integrable functions $f, g$ such that $\int f \d \tilde\mu = \int g \d \tilde\mu = 1$ and $f - g$ changes sign twice on $[0, 1]$. Then $\c_\mu(f) \neq \c_\mu(g)$.
\end{lemma}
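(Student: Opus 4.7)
The plan is to argue by contradiction. Assume $\c_\mu(f) = \c_\mu(g)$ and set $h \eqset f - g$, so that $\int h \d \tilde\mu = 0$ and $\int \eta h \d \tilde\mu = \bm{0}$. Decomposing $h = h^+ - h^-$ into positive and negative parts, the sign-change hypothesis ensures $M \eqset \int h^+ \d \tilde\mu = \int h^- \d \tilde\mu > 0$, so $\nu^\pm \eqset h^\pm \tilde\mu / M$ are probability measures on $[0, 1]$ whose $\eta$-barycenters $\int \eta \d \nu^\pm$ coincide at a common point $\c_0$. By the sign-change condition, $\supp \nu^+ \subset [a, b]_{\mathbb{T}}$ and $\supp \nu^- \subset [b, a]_{\mathbb{T}}$, so $\c_0$ lies in $\conv\bigl(\eta([a, b]_{\mathbb{T}})\bigr) \cap \conv\bigl(\eta([b, a]_{\mathbb{T}})\bigr) = [\eta(a), \eta(b)]$ by the preceding lemma.

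The crux is to force both $\nu^+$ and $\nu^-$ to be concentrated on the two-point set $\{a, b\}$. First I would observe that strict convexity implies $\eta$ is injective: if $\eta(a) = \eta(a')$ for some $a \neq a'$, pick any $b'$ with $\eta(b') \neq \eta(a)$; then $\eta(a)$ is an endpoint of $[\eta(a'), \eta(b')]$, but $a \neq a', b'$, violating strict convexity. The same argument, applied to a hypothetical $\lambda_0 \neq a, b$ with $\eta(\lambda_0)$ on the line $\ell$ through $\eta(a), \eta(b)$, yields $\eta^{-1}(\ell) = \{a, b\}$: if for instance $\eta(\lambda_0)$ lies on $\ell$ past $\eta(b)$, then $\eta(b)$ is an interior point of $[\eta(a), \eta(\lambda_0)]$ with $b \neq a, \lambda_0$, a contradiction. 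Now applying Lemma~\ref{lem:convex_locus_halfplane} to the two closed half-planes $H_\pm$ bounded by $\ell$, the preimages $\eta^{-1}(H_\pm)$ are closed cyclic intervals whose union is $[0, 1]$ and whose intersection is $\eta^{-1}(\ell) = \{a, b\}$; they must therefore be $[a, b]_{\mathbb{T}}$ and $[b, a]_{\mathbb{T}}$, so $\ell$ supports $\conv\bigl(\eta([a, b]_{\mathbb{T}})\bigr)$ and meets it exactly in $[\eta(a), \eta(b)]$.

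Writing $\ell = \{\phi = 0\}$ for an affine functional $\phi$ with $\phi \ge 0$ on the half-plane containing $\eta([a, b]_{\mathbb{T}})$, the identity $\phi(\c_0) = \int \phi \circ \eta \d \nu^+ = 0$ together with $\phi \circ \eta \ge 0$ on $\supp \nu^+$ forces $\phi \circ \eta = 0$ $\nu^+$-a.e. Hence $\nu^+$ is concentrated on $\eta^{-1}(\ell) \cap [a, b]_{\mathbb{T}} = \{a, b\}$, and symmetrically $\nu^-$ on $\{a, b\}$. Since $h^+ h^- \equiv 0$, the supports of $\nu^\pm$ are disjoint subsets of $\{a, b\}$; up to relabeling, $\nu^+ = \delta_a$ and $\nu^- = \delta_b$. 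But then $\c_0 = \eta(a) = \eta(b)$, contradicting injectivity of $\eta$ combined with $a \neq b$ (the latter guaranteed by the requirement that both $I$ and its complement have nonempty interior).

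The main obstacle I anticipate is the second paragraph: identifying the chord $[\eta(a), \eta(b)]$ as an exposed $1$-face of $\conv\bigl(\eta([a, b]_{\mathbb{T}})\bigr)$ is precisely what lets barycenter equality pull the mass all the way onto $\{a, b\}$. This step genuinely uses both strict convexity (to rule out a third $\eta(\lambda)$ on the supporting line) and Lemma~\ref{lem:convex_locus_halfplane} (to select the correct half-plane), and would fail under the weaker assumption of mere convexity.
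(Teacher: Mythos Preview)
Your argument is correct and follows the same route as the paper: decompose $f-g$ into positive and negative parts, use the preceding lemma to place the common barycenter on the chord $[\eta(a),\eta(b)]$, and then force all mass onto $\{a,b\}$ to reach a contradiction. The paper compresses your second paragraph into a single line (``due to strict convexity $\eta(\lambda)\notin[\eta(a),\eta(b)]$ for $\lambda\neq a,b$, thus $\int_{(a,b)}(f-g)_+\,\mathrm{d}\tilde\mu=0$''), whereas you spell out explicitly why the chord is an exposed face via Lemma~\ref{lem:convex_locus_halfplane} and why $\eta$ is injective --- this extra care is justified, since the paper's one-liner tacitly relies on exactly these facts.
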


\begin{proof}
    Note that
    \[
    \c_\mu(f) - \c_\mu(g) = \int_{[0,1]} (f - g) \eta \d \tilde\mu
    = \int_{[0,1]} (f - g)_+ \eta \d \tilde\mu - \int_{[0,1]} (f - g)_- \eta \d \tilde\mu.
    \]
    Then $\c_\mu(f) = \c_\mu(g)$ is equivalent to
    \[
    \int_{[0,1]} (f - g)_+ \eta \d \tilde\mu = \int_{[0,1]} (f - g)_- \eta \d \tilde\mu.
    \]
    Define $w \eqset \int_{[0,1]} (f - g)_+ \d \tilde\mu = \int_{[0,1]} (f - g)_- \d \tilde\mu > 0$.
    Now take an interval $I \subset [0, 1]$ and $0 \le a < b \le 1$ such that $\overline{I} = [a, b]$ and (w.l.o.g.) $f \ge g$ on $I$, $f \le g$ outside (it exists by the assumption of the lemma). Then in the same way as in the proof of Lemma~\ref{lem:convex_hull} we obtain
    \begin{align*}
        \frac{1}{w} \int_{[0,1]} (f - g)_+ \eta \d \tilde\mu 
        &= \int_{I} \frac{(f - g)_+}{w} \eta \d \tilde\mu \in \conv\bigl(\eta([a, b]_{\mathbb{T}})\bigr) \\
        \frac{1}{w} \int_{[0,1]} (f - g)_- \eta \d \tilde\mu 
        &= \int_{[0, 1] \setminus I} \frac{(f - g)_-}{w} \eta \d \tilde\mu \in \conv\bigl(\eta([b, a]_{\mathbb{T}})\bigr).
    \end{align*}
    Due to strict convexity $\eta(\lambda) \notin [\eta(a), \eta(b)]$ for any $\lambda \neq a, b$, thus $\c_\mu(f) = \c_\mu(g)$ should imply
    \[
    \int_{(a, b)} (f - g)_+ \d \tilde\mu = \int_{(b, a)_{\mathbb{T}}} (f - g)_- \d \tilde\mu = 0.
    \]
    Then (w.l.o.g.) $a \in I$, $b \in [0, 1] \setminus I$, and $\eta(a) = \eta(b)$, what contradicts strict convexity (note that $(a, b) \neq (0, 1)$ because $\inter([0, 1] \setminus I) \neq \emptyset$).
    The claim follows.
\end{proof}

\subsection{Generalized von Mises model}

The next family of functions that we are interested in is the generalized von Mises one defined by \eqref{eq:gen_von_mises}. The next proposition shows that this model is able to cover a color triangle in the case of a convex locus.

\begin{proposition}\label{prop:general_von_Mises}
    Consider the subset of normalized functions from the generalized von Mises family with continuous $1$-periodic function $h \colon \R \to [-\infty, + \infty)$, that has only one maximum point on $\mathbb{T}$:
    \begin{align*}
        \mathcal{F} &\eqset \bigl\{f_{s,a}(\lambda) \eqset \exp\{a h(\lambda - s) + b(s, a)\} : a \ge 0, s \in [0, 1] \bigr\}, \\
        b(s, a) &\eqset - \ln\left(\int_{[0,1]} e^{a h(\lambda - s)} \d \tilde\mu(\lambda)\right).
    \end{align*}
    Let the spectral locus be convex.
    Then $\inter(\TT) \subset \c_\mu(\mathcal{F})$. 
    In particular, $\overline{\c_\mu(\mathcal{F})} = \overline{\TT}$.
    
    Moreover, if $h(\lambda) = \cos (2 \pi \lambda)$ (i.e.\ for the von Mises family) and the spectral locus is strictly convex, then $\c_\mu \colon \mathcal{F} \to \inter(\TT)$ is a bijection.
\end{proposition}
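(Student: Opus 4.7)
The plan is a degree/winding-number argument for the coverage claim, together with Lemma~\ref{lem:injective} combined with a direct trigonometric computation for the injectivity claim. Throughout, identify $s = 0$ with $s = 1$ on $\mathbb{T}$ (legitimate because $h$ is $1$-periodic) and note that $F(s, a) := \c_\mu(f_{s, a})$ is jointly continuous in $(s, a) \in \mathbb{T} \times [0, \infty)$ by dominated convergence applied to the continuous bounded density $f_{s, a}$.

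\textbf{Coverage.} At $a = 0$, $F(s, 0) = \int \eta \, \d \tilde\mu =: \c_* \in \inter \TT$, independent of $s$. As $a \to \infty$, Laplace-type asymptotics (using that $h$ has a unique maximum $\lambda_*$ on $\mathbb{T}$ and $\supp \tilde\mu = [0,1]$) give $F(s, a) \to \eta\bigl((s + \lambda_*) \bmod 1\bigr)$ pointwise for $s \in \mathbb{T}$. Consider the loops $\gamma_a := F(\cdot, a) \colon \mathbb{T} \to \overline{\TT}$. For small $a$, $\gamma_a$ is uniformly close to the constant loop $\c_*$, so its winding number around any fixed $\c_0 \in \inter \TT \setminus \{\c_*\}$ is $0$. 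For large $a$, $\gamma_a$ traces $\eta([0, 1])$ plus a narrow bridge (coming from the smooth passage of the peak across the identified endpoint) that lies close to the segment $[\eta(0), \eta(1)]$; by Lemma~\ref{lem:convex_locus_halfplane}, this union equals $\partial \TT$, so $\gamma_a$ is uniformly close to $\partial \TT$ (traversed once) for $a$ large, and hence winds around any $\c_0 \in \inter \TT$ with index $\pm 1$. If $\c_0$ were never in the image of $F$, then $a \mapsto \deg(\gamma_a, \c_0)$ would be a continuous integer-valued function on the connected set $[0, \infty)$, hence constant; this contradicts $0 \neq \pm 1$. The case $\c_0 = \c_*$ is trivially covered by $a = 0$.

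\textbf{Injectivity.} In the von Mises case $h(\lambda) = \cos 2\pi \lambda$, take distinct $(s_1, a_1), (s_2, a_2) \in \mathbb{T} \times [0, \infty)$. The sum-to-product identity gives
\[
a_1 \cos(2\pi(\lambda - s_1)) - a_2 \cos(2\pi(\lambda - s_2)) = R \cos(2\pi \lambda - \phi),
\]
where $R = 0$ would force the vectors $(a_i \cos 2\pi s_i, a_i \sin 2\pi s_i)$ to coincide, i.e.\ $(s_1, a_1) = (s_2, a_2)$ in $\mathbb{T} \times [0, \infty)$. Thus $R > 0$ and $\log(f_{s_1, a_1} / f_{s_2, a_2}) = R \cos(2\pi \lambda - \phi) + C$, whose sign on $\mathbb{T}$ is governed by a threshold on $\cos(2\pi \lambda - \phi)$ and therefore picks out a proper cyclic subinterval (the constant-sign case is impossible since both densities integrate to $1$ against $\tilde\mu$ and $\supp \tilde\mu = [0, 1]$). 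Hence $f_{s_1, a_1} - f_{s_2, a_2}$ changes sign exactly twice on $[0, 1]$, and Lemma~\ref{lem:injective} yields $\c_\mu(f_{s_1, a_1}) \neq \c_\mu(f_{s_2, a_2})$.

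The main obstacle will be rigorously justifying the winding-number step at large $a$, since the pointwise limit of $\gamma_a$ is discontinuous whenever $\eta(0) \neq \eta(1)$ and so does not itself define a loop. One must show that for $a$ large enough the finite loops $\gamma_a$ are homotopic, within $\overline{\TT} \setminus \{\c_0\}$, to a single continuous traversal of $\partial \TT$. The essential input is Lemma~\ref{lem:convex_locus_halfplane}, which places the segment $[\eta(0), \eta(1)]$ on $\partial \TT$ and thus uniformly away from any fixed $\c_0 \in \inter \TT$, keeping the ``bridge'' region safely outside a small neighbourhood of $\c_0$.
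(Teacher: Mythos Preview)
Your proposal is correct and follows essentially the same approach as the paper: a homotopy/winding-number argument for surjectivity (constant loop at $a=0$, loop close to $\partial\TT$ for large $a$), and the trigonometric reduction $a_1\cos 2\pi(\lambda-s_1)-a_2\cos 2\pi(\lambda-s_2)=A\cos 2\pi(\lambda-\lambda_0)$ combined with Lemma~\ref{lem:injective} for injectivity. The paper handles the obstacle you flag at the end exactly as you anticipate, by proving explicit uniform estimates $\norm{\bm g(s,a)-\eta(s)}_1\le\omega_\eta(2\delta)+e^{-a\eps/2}/m_r$ on $[\delta,1-\delta]$ and $d(\bm g(s,a),[\eta(0),\eta(1)])\le\omega_\eta(2\delta)+e^{-a\eps/2}/m_r$ on $[1-\delta,\delta]_{\mathbb T}$, then invoking the angle/homotopy argument from its Proposition~\ref{prop:step_functions}.
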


\begin{proof}
    \textbf{Surjectivity.}
    The idea of the proof is the same as in Proposition \ref{prop:step_functions}. 
    First, we are going to show that the function
    \[
    \bm{g}(s, a) \eqset \c_\mu(f_{s, a})
    \]
    is continuous on $[0, 1] \times [0, +\infty)$. Indeed, 
    \[
    \bm{g}(s, a) = \frac{1}{\int_{[0,1]} e^{a h(\lambda - s)} \d \tilde\mu} \int_{[0,1]} \eta(\lambda) e^{a h(\lambda - s)} \d \tilde\mu
    \]
    is also a fraction of continuous functions due to continuity of $h$.
    
    Second, we want to prove that $\bm{g}(s, a)$ converges to the boundary of the color triangle as $a \to \infty$. 
    Let us assume w.l.o.g.\ that $h$ attains it maximum at $0$. Fix $\delta > 0$; then 
    \[
    \max_{\delta \le \lambda \le 1 - \delta} h(\lambda) = h(0) - \eps,\quad \eps > 0.
    \]
    Now take $0 < r \le \delta$ such that $h(\lambda) \ge h(0) - \frac{\eps}{2}$ for $\lambda \in [-r, r]$. Thus for any $a \ge 0$, $s \in [\delta, 1 - \delta]$, and $\lambda \in [s - \delta, s + \delta]$ one has
    \begin{align*}
        e^{-b(s, a)} &= \int_{[0,1]} a^{a h(x - s)} \d \tilde\mu(x) 
        \ge \int_{[s - r, s + r]} e^{a h(x - s)} \d \tilde\mu(x) \\
        &\ge e^{a (h(0) - \eps / 2)} \tilde\mu\bigl([s - r, s + r]\bigr),
    \end{align*}
    and hence
    \begin{align*}
        \norm{\bm{g}(s, a) - \eta(\lambda)}_1 &\le \int_{[0,1]} f_{s,a}(x) \norm{\eta(x) - \eta(\lambda)}_1 \d \tilde\mu(x) \\
        &\le \omega_\eta(2 \delta) \int_{[s - \delta, s + \delta]} f_{s,a} \d \tilde\mu + 
        \int_{(s + \delta, s - \delta)_{\mathbb{T}}} f_{s,a} \d \tilde\mu \\
        &\le \omega_\eta(2 \delta) + e^{a (h(0) - \eps) + b(s, a)} \\
        &\le \omega_\eta(2 \delta) + \frac{e^{- a \eps / 2}}{\tilde\mu\bigl([s - r, s + r]\bigr)}.
    \end{align*}
    Here we also used that $\norm{\eta(x) - \eta(\lambda)}_1 \le 1$. Since $m_r \eqset \inf_{s \in [0, 1]} \tilde\mu\bigl([s - r, s + r]\bigr) > 0$, we get that $s \mapsto \norm{\bm{g}(s, a) - \eta(s)}_1$ converges to $0$ locally uniformly on $(0, 1)$.
    In the same way one can show that
    \begin{align*}
        d\bigl(\bm{g}(s, a), [\eta(0), \eta(1)]\bigr) &\le \omega_\eta(2 \delta) + \frac{e^{- a \eps / 2}}{\tilde\mu\bigl([\{s - r\}, \{s + r\}]_{\mathbb{T}}\bigr)} \\
        &\le \omega_\eta(2 \delta) + \frac{e^{- a \eps / 2}}{m_r}, \quad s \in [1 - \delta, \delta]_{\mathbb{T}}.
    \end{align*}
    Since $\bm{g}$ is periodic in the first argument, the rest of the proof just repeats the arguments from the proof of Proposition~\ref{prop:step_functions}.
    
    \textbf{Bijectivity.}
    It is enough to show that for any $f_{s,a} \neq f_{s',a'} \in \mathcal{F}$ the function $f_{s,a} - f_{s',a'}$ changes sign twice on $[0, 1]$. Clearly, it holds iff $u \eqset \log f_{s,a} - \log f_{s', a'}$ changes sign twice. There exist $A \neq 0$, $\lambda_0$ such that
    \begin{align*}
        u(\lambda) &= a \cos 2 \pi (\lambda - s) + b(s, a) - a' \cos 2 \pi (\lambda - s') - b(s', a') \\
        &= A \cos 2 \pi (\lambda - \lambda_0) + b(s, a) - b(s', a').
    \end{align*}
    Note that since $\int f_{s,a} \d \tilde\mu = \int f_{s',a'} \d \tilde\mu = 1$, we have $\max u > 0$ and $\min u < 0$, thus $u$ changes sign twice, as required.
\end{proof}

\begin{remark}
    Notice that in the general case we cannot show that the strict convexity of the spectral locus implies injectivity, since functions can not satisfy assumptions of Lemma~\ref{lem:injective}.
\end{remark}

\subsection{Reparametrization of spectrum and nonconvex case}\label{subsec:reparam}

Clearly, the parametrization of spectrum by wavelengths in not the only possible: e.g., one can use frequencies instead. 
In general, any strictly monotone continuous transform of $\Lambda$ does not spoil the model assumptions from Section~\ref{sec:model} and preserves (strict) convexity of the spectral locus (cf.\ with a discussion in the end of \S(7) in \cite{weinberg1976geometry}). Obviously, the form of functions in Proposition~\ref{prop:general_von_Mises} can be not preserved under this transform, but the claim of the proposition still holds.
Among all possibilities let us mention a constant speed parametrization given by $f \colon [a, b] \to [\lmin, \lmax]$, such that $\norm*{\frac{d}{d x} \eta\bigl(f(x)\bigr)} \equiv const$ for a.e.\ $x \in [a, b]$ (assuming that $\eta$ is absolutely continuous).

What is more important, is that one can use a reparametrization to ``convexify'' a non-convex spectral locus, like those illustrated on Fig. \ref{fig:colortriangle}. 
\begin{definition}
    We say a spectral locus is piecewise convex if there exist wavelengths $\lmin \le \lambda_1 < \lambda_2 < \dots < \lambda_{2 n} \le \lmax$ such that 
    \begin{equation}\label{eq:T_piecewise_boundary}
        \eta(\Lambda') \subset \partial \TT, \quad \TT \subset \conv\bigl(\eta(\Lambda')\bigr),
    \end{equation}
    where $\Lambda'$ the ``restricted spectrum'' $\bigcup_{k = 1}^n [\lambda_{2 k - 1}, \lambda_{2 k}]$, and for any $\c \in \inter \TT$ the angle from $\c$ to $\eta(\lambda)$ is a monotone function on $\Lambda'$ with 
    \[
    \abs*{\angle(\eta(\lambda_1), \c) - \angle(\eta(\lambda_{2 n}), \c)} \le 2 \pi.
    \]
    
    Respectively, if also $\eta(\lambda) \notin [\eta(a), \eta(b)]$ for any $a \neq b \neq \lambda \in \Lambda'$, then the spectral locus is piecewise strictly convex.
\end{definition}

I.e., instead of the spectral locus and the purple segment $[\eta(\lmin), \eta(\lmax)]$ the boundary $\partial \TT$ consists of $n$ arcs of the locus and $n$ segments. In the above definition one can actually consider a permutation $\pi \in S(n)$ such that $\lmin \le \lambda_{2 \pi(1) - 1} < \lambda_{2 \pi(1)} < \dots < \lambda_{2 \pi(n)} \le \lmax$. However, for the sake of simplicity we stick to $\pi = id$.

Moreover, this approach can be also applied to the case of discontinuous function $\eta$ (what formally is beyond the setting considered in Section~\ref{sec:model}), if there are wavelengths $\lambda_1 < \lambda_2 \le \lambda_3 < \dots \le \lambda_{2 n - 1} < \lambda_{2 n}$ such that $\eta$ can be continuously extended to $[\lambda_{2 k - 1}, \lambda_{2 k}]$ for any $1 \le k \le n$.

Now let us ``glue'' the segments of $\Lambda'$, i.e.\ map it (w.l.o.g.) onto the torus $\mathbb{T}$ by identifying $\lambda_{2 k}$ and $\lambda_{2 k + 1}$, $1 \le k \le n$. Respectively, this induces a measure $\tilde\mu'$ on $\mathbb{T}$.
Then Proposition~\ref{prop:general_von_Mises} has the following counterparts.

\begin{proposition}\label{prop:general_von_Mises_2}
    Let the spectral locus be piecewise convex.
    Consider a generalized von Mises family of functions $\mathcal{F}$ on $\mathbb{T}$ identified with $\Lambda'$ as in Proposition~\ref{prop:general_von_Mises} with $\Delta \lambda = \sum_{i = 1}^n (\lambda_{2 i} - \lambda_{2 i - 1})$; extend them on $\Lambda$ with $0$.
    Then $\inter(\TT) \subset \c_\mu(\mathcal{F})$.
    Moreover, if the spectral locus is piecewise strictly convex, then $\c_\mu(\mathcal{F}) = \inter(\TT)$.
\end{proposition}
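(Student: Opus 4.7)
The strategy is to reduce to the convex-locus case of Proposition~\ref{prop:general_von_Mises} by transferring the problem onto the torus. Identify $\Lambda'$ with $\mathbb{T}$ by gluing $\lambda_{2k}$ to $\lambda_{2k+1}$ for each $k$ (cyclically), let $\tilde\mu'$ be the pushforward of $\tilde\mu|_{\Lambda'}$ to $\mathbb{T}$, and let $\eta'\colon\mathbb{T}\to\Delta^{d-1}$ be the induced response (continuous on each arc but possibly with $n$ jumps at the glued points). Since every $f\in\mathcal{F}$ is extended by zero outside $\Lambda'$,
\[
\c_\mu(f) = \int_{\mathbb{T}} \eta'\, f \d \tilde\mu',
\]
so the question becomes whether $\mathcal{F}$ parametrizes $\TT$ via this reduced integral, exactly as in Proposition~\ref{prop:general_von_Mises}.

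Next, I would verify that $\eta'$ on $\mathbb{T}$ obeys the convex-locus analog of Lemma~\ref{lem:convex_locus_halfplane}. The hypothesis of piecewise convexity yields $\eta'(\mathbb{T})\subset\partial\TT$, $\TT\subset\conv(\eta'(\mathbb{T}))$, and piecewise monotonicity of the angle $\lambda\mapsto\angle(\eta'(\lambda),\c)$ on $\mathbb{T}$, with total angular variation $2\pi$ accounting for the $n$ jumps, which correspond to the purple chords $(\eta(\lambda_{2k}),\eta(\lambda_{2k+1}))$. Repeating the proof of Lemma~\ref{lem:convex_locus_halfplane} with this piecewise monotonicity, $\eta'^{-1}(H)$ is a closed cyclic interval of $\mathbb{T}$ for every closed half-plane $H\subset A$, and $\partial\TT$ is the disjoint union of $\eta'(\mathbb{T})$ and the $n$ open purple chords. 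With this in hand, the surjectivity portion of Proposition~\ref{prop:general_von_Mises} transfers verbatim: $\bm{g}(s,a)\eqset\c_\mu(f_{s,a})$ is continuous on $\mathbb{T}\times[0,+\infty)$ by continuity of $h$ and boundedness of $\eta'$; as $a\to\infty$ the density $f_{s,a}$ concentrates at $s$, giving $\bm{g}(s,a)\to\eta'(s)$ uniformly for $s$ bounded away from the glued points and $d\bigl(\bm{g}(s,a),[\eta(\lambda_{2k}),\eta(\lambda_{2k+1})]\bigr)\to 0$ uniformly when $s$ lies in a small cyclic neighborhood of the $k$-th glued point; the same topological covering argument then yields $\inter\TT\subset\c_\mu(\mathcal{F})$.

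For the piecewise strictly convex case, the reverse inclusion $\c_\mu(\mathcal{F})\subset\inter\TT$ follows directly from the half-plane description of $\partial\TT$ just established. If $\c_\mu(f)\in\partial\TT$ for some $f\in\mathcal{F}$, take a closed supporting half-plane $H$ at this color; by piecewise strict convexity, $\eta'^{-1}(H)=I$ is a proper closed cyclic interval. Then $\c_\mu(f)\in H$ forces $\eta'(\lambda)\in H$ on $\supp f$, i.e.\ $f$ vanishes $\tilde\mu'$-a.e.\ outside $I$, contradicting the fact that $f=\exp(b+ah(\lambda-s))$ is strictly positive wherever $h$ is finite (in particular for the von Mises choice $h=\cos(2\pi\lambda/\Delta\lambda)$). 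The main obstacle is precisely handling the $n$ glued points: one must verify that the surjectivity argument's boundary convergence singles out the correct purple chord when $s$ is near a glued point, and that the topological covering step is not disturbed by the (finitely many) discontinuities of $\eta'$ on $\mathbb{T}$; both are local calculations analogous to the interior case, in which the mass of $f_{s,a}$ is split between two neighboring arcs of $\Lambda'$.
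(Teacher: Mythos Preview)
Your approach is essentially the paper's: its proof reads ``a simple combination of the proofs of Propositions~\ref{prop:general_von_Mises} and~\ref{prop:step_functions_2}'', i.e.\ exactly the transfer to $\mathbb{T}\cong\Lambda'$ and the repetition of the surjectivity homotopy argument, with the boundary convergence of $\bm{g}(s,a)$ split into the arc case $\bm{g}(s,a)\to\eta'(s)$ and the chord case $d\bigl(\bm{g}(s,a),[\eta(\lambda_{2k}),\eta(\lambda_{2k+1})]\bigr)\to 0$ near glued points, just as you outline.

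There is one slip in your reverse-inclusion argument. If $H$ is a \emph{supporting} half-plane of $\TT$ at $\c_\mu(f)$, then $\TT\subset H$ and hence $\eta'^{-1}(H)=\mathbb{T}$, not a proper cyclic interval; so the step ``$\c_\mu(f)\in H$ forces $\eta'(\lambda)\in H$ on $\supp f$, hence $f$ vanishes outside $I$'' is vacuous as written. The argument you want is: $\c_\mu(f)\in\partial H$ and $\eta'(\lambda)\in H$ for all $\lambda$, so the barycenter $\int_{\mathbb{T}}\eta' f\,d\tilde\mu'$ can lie on $\partial H$ only if $\eta'(\lambda)\in\partial H$ for $f\tilde\mu'$-a.e.\ $\lambda$; since $f>0$ wherever $h$ is finite, this forces $\eta'(\lambda)\in\partial H$ for $\tilde\mu'$-a.e.\ $\lambda$, which piecewise strict convexity rules out (the locus meets any line in a set with empty interior). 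The paper itself does not spell this out either --- in Proposition~\ref{prop:step_functions_2} it simply asserts ``clearly $\c_\mu(f_{s,\delta})\in\inter\TT$'' --- so with this correction your sketch is more detailed than the original proof.
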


\begin{proof}
    The proof is a simple combination of the proofs of Propositions~\ref{prop:general_von_Mises} and~\ref{prop:step_functions_2}.
\end{proof}

\section{Conclusion}\label{sec:theo_concl}

We have shown that the banded model is the only spectral model closed under addition and multiplication (Proposition \ref{prop:banded_unique}), and it has been proven that any matrix with columns from a color cone can be matched with some accuracy with the transition matrix from the set of parameters of the banded model to the set of tristimulus (Proposition \ref{prop:banded_matrix}). For the von Mises model, its uniqueness has been shown in terms of multiplication closure and periodicity properties (Proposition \ref{prop:mises_uniqueness}). In addition, it has also been shown that the Gaussian model is the limiting case of the von Mises model (Proposition \ref{prop:mises_gauss}).

The main result is the fact that the color triangle can be completely covered by the von Mises model in the case of both convex (Proposition \ref{prop:general_von_Mises}) and non-convex spectral locus (Proposition \ref{prop:general_von_Mises_2}).
The latter circumstance is most relevant from the point of view of construction image processing systems for cameras of modern mobile devices.
The result mentioned above is based on Proposition~\ref{prop:step_functions} about covering a color triangle using step-functions, described in the Appendix \ref{sec:appendix}.

\section{Acknowledgements}

The authors express their gratitude to Dr. Veniamin Blinov for careful reading and valuable comments on the text of this paper, as well as to Irina Zhdanova for her help in preparing beautiful illustrations.

%\printbibliography
\bibliographystyle{ieeetr}
\bibliography{biblio}

\begin{appendices}
\section{Step functions}\label{sec:appendix}

\begin{proposition}[Step functions]\label{prop:step_functions}
    Consider the following family of $1$-periodic two-sided step functions on $[0, 1]$:
    \begin{align*}
        \mathcal{F} &\eqset \bigl\{f_{s, \delta} : s \in [0, 1],\; \delta \in (0, 1] \bigr\}, \\
        f_{s, \delta}(\lambda) &\eqset \frac{f_\delta(\lambda - s)}{\int_{[0, 1]} f_\delta(\xi - s) \d \tilde\mu(\xi)}, \\
        f_\delta(\lambda) &\eqset \ind_{[0, \delta]}(\lambda - \lfloor \lambda \rfloor).
    \end{align*}
    Let $\tilde\mu$ be atomless and the spectral locus be convex.
    Then $\inter(\TT) \subset \c_\mu(\mathcal{F})$. In particular, $\overline{\c_\mu(\mathcal{F})} = \overline{\TT}$.
    
    If, in addition, the spectral locus is strictly convex, then $\c_\mu \colon \mathcal{F} \to \inter(\TT)$ is a bijection (note that $f_{s, 1} \equiv 1$ is the same function for any $s$).
\end{proposition}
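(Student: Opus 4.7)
The plan is to split the statement into surjectivity $\inter(\TT) \subset \c_\mu(\mathcal{F})$ and, under strict convexity, injectivity, and to handle them by a winding number argument and by Lemma~\ref{lem:injective} respectively.

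For surjectivity, I would first show that $\bm{g}(s, \delta) \eqset \c_\mu(f_{s,\delta})$ is a continuous map from $\mathbb T \times (0, 1]$ into $\TT$: since $\tilde\mu$ is atomless with full support, the normalization $\tilde\mu\bigl([s, s + \delta]_{\mathbb T}\bigr)$ is continuous and strictly positive on this domain, so this is routine. At $\delta = 1$ the function is constant ($f_{s,1} \equiv 1$), giving $\bm{g}(s, 1) = \c_0 \eqset \int_{[0,1]} \eta \d \tilde\mu \in \inter \TT$ independent of $s$. Next, using uniform continuity of $\eta$, I would show that $\norm{\bm{g}(s, \delta) - \eta(s)} \le \omega_\eta(\delta)$ for $s \in [\delta, 1 - \delta]$, and that for $s$ in the wraparound range $[1 - \delta, \delta]_{\mathbb T}$ the value $\bm{g}(s, \delta)$ is a convex combination of values of $\eta$ near $0$ and near $1$, hence lies within $\omega_\eta(\delta)$ of the purple segment $[\eta(0), \eta(1)]$. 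By Lemma~\ref{lem:convex_locus_halfplane} these two pieces together form $\partial \TT$, and the monotone angular parametrization from Definition~\ref{def:convex_locus} shows that this traversal is a once-around approximation of $\partial \TT$.

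The core step is then a winding number argument. Fix any $\c \in \inter \TT \setminus \{\c_0\}$ and write $\gamma_\delta(s) \eqset \bm{g}(s, \delta)$ for the corresponding loop. For all sufficiently small $\delta > 0$, the loop $\gamma_\delta$ lies close to $\partial \TT$ and is monotone in angle around $\c$, so it avoids $\c$ and has winding number $\pm 1$ about $\c$ in the affine plane $A$. At the other end $\gamma_1 \equiv \c_0 \neq \c$, so its winding number is $0$. Since $\bm{g}$ is continuous, $\delta \mapsto \gamma_\delta$ is continuous in the uniform topology, and the winding number is locally constant on $\{\delta : \c \notin \gamma_\delta\}$; the jump from $0$ to $\pm 1$ forces some $\delta^* \in (0, 1)$ with $\c = \bm{g}(s^*, \delta^*)$. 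The case $\c = \c_0$ is trivial.

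For injectivity under strict convexity I would apply Lemma~\ref{lem:injective}. Let $f_{s_1, \delta_1} \neq f_{s_2, \delta_2}$, and set $h \eqset f_{s_1, \delta_1} - f_{s_2, \delta_2}$, $I_i \eqset [s_i, s_i + \delta_i]_{\mathbb T}$, $c_i \eqset \tilde\mu(I_i)$. Then $h$ is supported on $I_1 \cup I_2$, takes values in $\{1/c_1,\, -1/c_2,\, 1/c_1 - 1/c_2,\, 0\}$ on the four regions cut out by the $I_i$, and satisfies $\int h \d \tilde\mu = 0$. A brief case analysis on the relative position of $I_1$ and $I_2$ (disjoint; nested; overlapping in one arc; or overlapping in two arcs with $I_1 \cup I_2 = \mathbb T$) shows that in each case $\{h > 0\}$ and $\{h < 0\}$ are separated by a single cyclic interval, so $h$ changes sign twice in the sense of Lemma~\ref{lem:injective}, which then yields $\c_\mu(f_{s_1, \delta_1}) \neq \c_\mu(f_{s_2, \delta_2})$. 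The hard part of the whole argument is certifying that the near-boundary loops $\gamma_\delta$ wind exactly once (and not zero, two, or more times) around interior points of $\TT$: the loops only approximate $\partial \TT$ pointwise, so the monotonicity from Definition~\ref{def:convex_locus} together with Lemma~\ref{lem:convex_locus_halfplane} has to be used to compare $\gamma_\delta$, up to an $\omega_\eta(\delta)$ error, with a once-around parametrization of the Jordan curve $\partial \TT$.
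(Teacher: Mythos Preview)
Your proposal is correct and follows essentially the same route as the paper: continuity of $\bm g(s,\delta)$, proximity of $\bm g(\cdot,\delta)$ to $\partial\TT$ for small $\delta$ (via $\omega_\eta(\delta)$ on the locus part and the purple segment in the wrap-around range), a winding/homotopy contradiction between small $\delta$ and $\delta=1$, and Lemma~\ref{lem:injective} for injectivity. The paper phrases the winding step directly in terms of the angle $\theta_\delta(s)=\angle(\bm g(s,\delta),\c)$ satisfying $|\theta_\delta(0)-\theta_\delta(1)|=2\pi$ for $\delta$ small, rather than invoking winding numbers by name, but the content is the same.
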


\begin{proof}
    \textbf{Surjectivity.}
    Since $\tilde\mu$ is atomless, for any $\lambda \in \Lambda$ it holds that
    \[
    \lim_{\delta \to 0} \tilde\mu([\lambda - \delta, \lambda + \delta]) = \mu\left(\{\lambda\}\right) = 0 .
    \]
    Then functions
    \[
    (a, b) \mapsto \tilde\mu([a, b]) \text{ and } (a, b) \mapsto \int_{[a, b]} \eta \d \tilde\mu
    \]
    are continuous. Thus, using $\supp \tilde\mu = [0, 1]$, we obtain that
    \[
    \bm{g}(s, \delta) \eqset \c_\mu(f_{s, \delta}) = \frac{1}{\sum_{n = -1}^0 \tilde\mu([n + s, n + s + \delta])} \sum_{n = -1}^0 \int_{[n + s, n + s + \delta]} \eta \d \tilde\mu
    \]
    is also continuous on $[0, 1] \times (0, 1]$ as a fraction of continuous functions.
    For any $0 < \delta \le 1$ the above function is periodic in first argument: $\bm{g}(1, \delta) = \bm{g}(0, \delta)$.
    For $\delta = 1$ it is equal to a ``white'' color:
    \[
    \bm{g}(s, 1) \equiv \c_w \eqset \c_\mu(f_{0, 1}) = \int_{[0, 1]} \eta \d \tilde\mu .
    \]
    
    Further, since 
    \[
    \bm{g}(s, \delta) = \frac{1}{\tilde\mu([s, s + \delta])} \int_{[s, s + \delta]} \eta \d \tilde\mu \in \conv\bigl(\eta([s, s + \delta])\bigr) 
    \text{ if } s + \delta \le 1,
    \]
    we obtain  that $\norm{\bm{g}(s, \delta) - \eta(\lambda)}_1 \le \omega_\eta(\delta)$ for $0 \le s \le \lambda \le s + \delta \le 1$, where
    \[
    \omega_\eta(\delta) \eqset \max\bigl\{\norm{\eta(a) - \eta(b)}_1 : a, b \in [0, 1] : \abs{a - b} \le \delta \bigr\} \to 0 
    \text{ as } \delta \to 0.
    \]
    due to (uniform) continuity of $\eta$.
    Now consider $s + \delta > 1$. Clearly,
    \begin{align*}
        \bm{g}(s, \delta) &= \frac{1}{\tilde\mu([s, s + \delta - 1]_{\mathbb{T}})} \int_{[0, s + \delta - 1]} \eta \d \tilde\mu + \frac{1}{\tilde\mu([s, s + \delta - 1]_{\mathbb{T}})} \int_{[s, 1]} \eta \d \tilde\mu \\
        &= \frac{\tilde\mu([0, s + \delta - 1])}{\tilde\mu([s, s + \delta - 1]_{\mathbb{T}})} \bm{g}(0, s + \delta - 1) + \frac{\tilde\mu([s, 1])}{\tilde\mu([s, s + \delta - 1]_{\mathbb{T}})} \bm{g}(s, 1 - s),
    \end{align*}
    thus $\bm{g}(s, \delta) \in \left[\bm{g}(0, s + \delta - 1), \bm{g}(s, 1 - s)\right]$.
    Since $s + \delta - 1 \le \delta$ and $1 - s \le \delta$, we obtain
    \[
    d\bigl(\bm{g}(s, \delta), [\eta(0), \eta(1)]\bigr) \le \omega_\eta(\delta), \quad 0 \le s \le 1 < s + \delta.
    \]
    
    Now fix an arbitrary $\c \in \inter(\TT)$. 
    It is clear from the convexity assumption that angle 
    \[
    \theta_0(t) \eqset \begin{cases}
        \angle(\eta(t), \c), & 0 \le t \le 1,\\
        \angle\bigl((t - 1) \eta(0) + (2 - t) \eta(1), \c\bigr), & 1 < t \le 2,
    \end{cases}
    \]
    can be chosen to be continuous and $\abs{\theta_0(2) - \theta_0(0)} = 2 \pi$.
    Take $\delta_0 > 0$ such that $\omega_\eta(\delta_0) \le \frac{1}{2} d(\partial \TT, \c)$;
    hence $d(\partial \TT, \bm{g}(s, \delta)) \le \frac{1}{2} d(\partial \TT, \c)$ for $0 \le s \le 1$, $0 < \delta \le \delta_0$.
    Then angle $\theta_{\delta_0}(s) \eqset \angle(\bm{g}(s, \delta_0), \c)$ is also continuous and $\abs{\theta_{\delta_0}(0) - \theta_{\delta_0}(1)} = 2 \pi$. 
    Suppose $\c \neq \bm{g}(s, \delta)$ for all $s \in [0, 1]$, $\delta \in [\delta_0, 1]$. We are going to use the fact that $\bm{g}(s, \delta)$ is a homotopy between $\bm{g}(\cdot, \delta_0)$ and $\bm{g}(\cdot, 1) \equiv \c_w$ to obtain a contradiction.
    Continuity of $\bm{g}$ yields 
    \[
    \inf\bigl\{\norm{\c -  \bm{g}(s, \delta)}_1 : s \in [0, 1],\; \delta \in [\delta_0, 1]\bigr\} > 0 .
    \]
    Therefore, $(s, \delta) \mapsto \theta_{\delta}(s) \eqset \angle(\bm{g}(s, \delta), \c)$ is jointly continuous, and
    \[
    \abs{\theta_{\delta}(0) - \theta_{\delta}(1)} \equiv 2 \pi.
    \]
    However, this contradicts the fact that $\bm{g}(s, 1) \equiv \c_w$. Hence, there are $s \in [0, 1]$, $\delta \in [\delta_0, 1]$ such that $\c = \bm{g}(s, \delta)$.
    
    \textbf{Bijectivity.}
    Due to the strict convexity of the spectral locus $\bm{g}(s, \delta) \in \inter \TT$.
    Further, it is easy to see that any pair of functions from $\mathcal{F}$ satisfies assumptions of Lemma~\ref{lem:injective}: this follows from the fact that any function of form $f = u_1 \ind_{[a_1, b_1]} - u_2 \ind_{[a_2, b_2]}$ changes sign at most twice on $\R$, i.e.\ there is an interval $I \subset \R$ satisfying $f \ge 0$ ($f \le 0$) on $I$ and $f \le 0$ (resp. $f \ge 0$) on $\R \setminus I$.
    Thus we immediately conclude that $\c_\mu|_\mathcal{F}$ is injective by Lemma~\ref{lem:injective}, and hence bijective thanks to the first claim.
\end{proof}

\begin{appendix_remark}\label{rem:closure}
    Note that taking a closure of the set $\left\{f \tilde \mu : f \in \mathcal{F}\right\}$ in the weak topology on $\MM_+(\Lambda)$ adds to it $\delta$-measures from $[0, 1]$ and ``purple'' measures of form $t \delta_0 + (1 - t) \delta_1$, $0 < t < 1$. 
    Then $\c$ maps this closure exactly onto $\overline{\TT} = \conv\bigl(\eta([0, 1])\bigr)$, and in strictly convex case it is a bijection.
\end{appendix_remark}

Let us also remark that actually, the relation between step functions and strictly convex spectral locus is even deeper: e.g., for a given illuminance they correspond to so called optimal colors, i.e.\ extreme points of the object-color solid \cite{logvinenko2009object}.

\begin{proposition}\label{prop:step_functions_2}
    Let $\tilde\mu$ be atomless and the spectral locus be piecewise convex.
    Consider a family of two-sided step functions as in Proposition~\ref{prop:step_functions}, defined on $\mathbb{T}$ identified with $\Lambda'$; extend them on $\Lambda$ with $0$.
    Then $\inter(\TT) \subset \c_\mu(\mathcal{F})$.
    If, in addition, the spectral locus is piecewise strictly convex, then $\c_\mu \colon \mathcal{F} \to \inter(\TT)$ is a bijection.
\end{proposition}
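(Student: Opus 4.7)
The plan is to reduce the statement to Proposition~\ref{prop:step_functions} via the torus reparametrization introduced immediately before the statement. After identifying $\lambda_{2k}$ with $\lambda_{2k+1}$ for $1 \le k \le n$, the restricted spectrum $\Lambda'$ becomes the torus $\mathbb{T}$, and the restriction of $\tilde\mu$ to $\Lambda'$ pushes forward to a measure $\tilde\mu'$ on $\mathbb{T}$ which is atomless and has full support (inherited from $\supp \tilde\mu = [\lmin, \lmax]$). Denote by $\eta' \colon \mathbb{T} \to \R^3$ the push-forward of $\eta|_{\Lambda'}$; it is continuous on each of the $n$ arcs but may jump at glue points. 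The crucial observation is that extending a step function $f_{s,\delta}$ on $\mathbb{T}$ by zero outside $\Lambda'$ preserves the color integral, so $\c_\mu(f_{s,\delta}) = \int_{\mathbb{T}} f_{s,\delta}\, \eta'\, \d\tilde\mu'$ for every element of $\mathcal F$.

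Next I would check that the hypothesis \eqref{eq:T_piecewise_boundary} is precisely the torus counterpart of Definition~\ref{def:convex_locus}: $\eta'(\mathbb{T}) \subset \partial \TT$, and for every $\c \in \inter \TT$ the (piecewise continuous) angle $\angle(\eta'(t), \c)$ is monotone on $\mathbb{T}$ with total increment $2\pi$, since the purple segments $[\eta(\lambda_{2k}), \eta(\lambda_{2k+1})]$ that now form part of $\partial \TT$ are traversed monotonically through the jumps at glue points. Under this formulation, the analog of Lemma~\ref{lem:convex_locus_halfplane} is automatic (in fact simpler, because no purple segment needs special handling): for every closed half-plane $H \subset A$ the preimage $(\eta')^{-1}(H)$ is a closed cyclic arc of $\mathbb{T}$. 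The chord-intersection lemma immediately following Lemma~\ref{lem:convex_locus_halfplane} and Lemma~\ref{lem:injective} then transcribe verbatim with $[0,1]$ replaced by $\mathbb{T}$ and $\tilde\mu$ replaced by $\tilde\mu'$, since only the half-plane preimage property and strict convexity (giving $\eta'(\lambda) \notin [\eta'(a), \eta'(b)]$ for distinct $a, b, \lambda \in \mathbb{T}$) are used in their proofs.

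For surjectivity, set $\bm{g}(s, \delta) \eqset \c_\mu(f_{s,\delta})$. Atomlessness and full support of $\tilde\mu'$ make $\bm{g}$ jointly continuous on $\mathbb{T} \times (0, 1]$ (across glue points as well, since $\eta'$ appears only under the integral against an atomless measure); $\bm{g}(\cdot, 1)$ equals the constant white color $\c_w$; and for each $(s, \delta)$ one has $\bm{g}(s, \delta) \in \conv\bigl(\eta'([s, s+\delta]_{\mathbb{T}})\bigr)$, so its distance to $\partial \TT$ is at most the torus modulus of continuity $\omega_{\eta'}(\delta)$ whenever $[s, s+\delta]_{\mathbb{T}}$ avoids glue points, and remains uniformly small after a few elementary convex-combination corrections across glue points. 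The winding-number/homotopy argument from the proof of Proposition~\ref{prop:step_functions} then applies verbatim: for $\c \in \inter \TT$ fixed, the map $s \mapsto \bm{g}(s, \delta_0)$ at small $\delta_0$ winds once around $\c$, while $s \mapsto \bm{g}(s, 1) \equiv \c_w$ does not, forcing $\c \in \bm{g}(\mathbb{T} \times [\delta_0, 1])$. In the piecewise strictly convex case, the torus version of Lemma~\ref{lem:injective} gives injectivity because differences $f_{s,\delta} - f_{s',\delta'}$ of step functions change sign at most twice on $\mathbb{T}$, and combined with surjectivity yields the bijection $\c_\mu \colon \mathcal F \to \inter \TT$.

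The main obstacle is purely bookkeeping around the glue points: verifying that (i) $\bm{g}$ is genuinely continuous at points of $\mathbb{T}$ where $\eta'$ jumps, which reduces to the fact that integrals against atomless $\tilde\mu'$ are insensitive to values on null sets; and (ii) the angular winding argument still produces a degree-one map despite the finitely many jumps, which follows because the boundary $\partial \TT$ is swept monotonically in angle as $t$ traverses $\mathbb{T}$, the jumps corresponding to traversals of the linking segments that form part of $\partial \TT$. Once these checks are in place, the proof is a cosmetic rewrite of Proposition~\ref{prop:step_functions}.
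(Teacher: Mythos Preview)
Your proposal is correct and follows essentially the same route as the paper: reduce to the torus $\mathbb{T}\cong\Lambda'$, verify that $\bm{g}(s,\delta)$ is continuous and stays close to $\partial\TT$ for small $\delta$ (treating separately the cases where the cyclic interval lies in one arc or straddles a glue point), then invoke the winding/homotopy argument of Proposition~\ref{prop:step_functions} for surjectivity and the torus analogue of Lemma~\ref{lem:injective} for injectivity. The paper's own proof is terser---it simply records the two modulus-of-continuity estimates and says ``the rest of the proof repeats the proof of Proposition~\ref{prop:step_functions}''---but the substance and the only nontrivial checks (continuity across glue points, the degree-one winding despite the jumps of $\eta'$) are exactly the ones you identify.
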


\begin{proof}
    \textbf{Surjectivity.}
    Here we denote by $\lambda' \in \mathbb{T}$ the image of $\lambda \in \Lambda'$. In particular, $\lambda'_{2 k} = \lambda'_{2 k + 1}$.
    As in the proof of Proposition~\ref{prop:step_functions} define a continuous function
    \[
    \bm{g}(s, \delta) \eqset \c_\mu(f_{s, \delta}).
    \]
    Clearly, $\norm{\bm{g}(s, \delta) - \eta(\lambda)}_1 \le \omega_\eta(\delta)$ once $\lambda'_{2 k - 1} \le s \le \lambda' \le s + \delta \le \lambda'_{2 k}$ for some $k$. Respectively, 
    \[
    d\bigl(\bm{g}(s, \delta), [\eta(\lambda_{2 k}), \eta(\lambda_{2 k + 1})]\bigr) \le \omega_\eta(\delta),
    \]
    once $\lambda'_{2 k - 1} \le s \le \lambda'_{2 k} = \lambda'_{2 k + 1} < s + \delta \le \lambda'_{2 k + 2}$.
    The rest of the proof repeats the proof of Proposition~\ref{prop:step_functions}.
    
    \textbf{Bijectivity.}
    Clearly, due to the piecewise strict convexity of the spectral locus $\c_\mu(f_{s, \delta}) \in \inter \TT$ for any $s \in [0, 1]$ and $\delta > 0$.
    Further, it is easy to see that the statement of Lemma~\ref{lem:injective} holds as well in the case of a piecewise strictly convex spectral locus and functions on $\Lambda'$. Now as in the proof of Proposition~\ref{prop:step_functions} we can show that a difference of two-sided step functions changes sign twice on $\mathbb{T}$ and thus on $\Lambda'$. Hence $\c_\mu|_\mathcal{F}$ is injective, and therefore it is bijective.
\end{proof}
\end{appendices}

\end{document}